\newcommand{\bx}{\boldsymbol{x}}
\newcommand{\be}{\boldsymbol{e}}
\newcommand{\beps}{\boldsymbol{\epsilon}}
\newcommand{\by}{\boldsymbol{y}}
\newcommand{\bP}{\boldsymbol{P}}
\newcommand{\btheta}{\boldsymbol{\theta}}
\newcommand{\bI}{\boldsymbol{I}}
\newcommand{\bbP}{\mathbb{P}}
\newcommand{\diag}{\mathsf{diag}}
\newtheorem{remark}{\textbf{Remark}}
\newcommand{\mE}{\mathbb{E}}
\newcommand{\cC}{\mathcal{C}}
\newcommand{\cN}{\mathcal{N}}
\newcommand{\cO}{\mathcal{O}}
\newcommand{\ri}{\mathrm{i}}
\newcommand{\baralpha}{\bar{\alpha}}
\title{Towards Understanding the Working Mechanism of Text-to-Image Diffusion Model}
\author{Mingyang Yi$^{1*}$, Aoxue Li$^{2*}$, Yi Xin$^{3*}$, Zhenguo Li$^{2}$\\
	$^{1}$ Renmin University of China\\ 
	$^{2}$ Huawei Noah's Ark Lab\\
	$^{3}$ Nanjing University\\
	\texttt{\{yimingyang@ruc.edu.cn\}} \\
	\texttt{\{liaoxue2,li.zhenguo\}@huawei.com} \\
	\texttt{xinyi@smail.nju.edu.cn}
}
\begin{document}
	\maketitle
	\def\thefootnote{*}\footnotetext{equal contribution}
	\def\thefootnote{\dag}\footnotetext{corresponding to Mingyang Yi: yimingyang@ruc.edu.cn}
	
	% It is OKAY to include author information, even for blind
	% submissions: the style file will automatically remove it for you
	% unless you've provided the [accepted] option to the icml2024
	% package.
	
	% List of affiliations: The first argument should be a (short)
	% identifier you will use later to specify author affiliations
	% Academic affiliations should list Department, University, City, Region, Country
	% Industry affiliations should list Company, City, Region, Country
	
	% You can specify symbols, otherwise they are numbered in order.
	% Ideally, you should not use this facility. Affiliations will be numbered
	% in order of appearance and this is the preferred way.
	
	% You may provide any keywords that you
	% find helpful for describing your paper; these are used to populate
	% the "keywords" metadata in the PDF but will not be shown in the document
	\begin{abstract}
		Recently, the strong latent Diffusion Probabilistic Model (DPM) has been applied to high-quality Text-to-Image (T2I) generation (e.g., Stable Diffusion), by injecting the encoded target text prompt into the gradually denoised diffusion image generator. Despite the success of DPM in practice, the mechanism behind it remains to be explored. To fill this blank, we begin by examining the intermediate statuses during the gradual denoising generation process in DPM. The empirical observations indicate, the shape of image is reconstructed after the first few denoising steps, and then the image is filled with details (e.g., texture). The phenomenon is because the low-frequency signal (shape relevant) of the noisy image is not corrupted until the final stage in the forward process (initial stage of generation) of adding noise in DPM. Inspired by the observations, we proceed to explore the influence of each token in the text prompt during the two stages. After a series of experiments of T2I generations conditioned on a set of text prompts. We conclude that in the earlier generation stage, the image is mostly decided by the special token [\texttt{EOS}] in the text prompt, and the information in the text prompt is already conveyed in this stage. After that, the diffusion model completes the details of generated images by information from themselves. Finally, we propose to apply this observation to accelerate the process of T2I generation by properly removing text guidance, which finally accelerates the sampling up to 25\%+. % \yi{20}.
		
		% Finally, we propose to apply this observation to accelerate T2I generation and latent consistency model training by properly removing text guidance, which finally accelerates the sampling and training respectively up to 25\%+ and% \yi{20}.
	\end{abstract}
	\section{Introduction}
	In real-world application, the Text-to-Image (T2I) generation has long been explored owing to its wide applications \citep{yu2022scaling,ramesh2022hierarchical,rombach2022high,bao2023all,kang2023scaling}, whereas the Diffusion Probabilistic Model (DPM) \citep{ho2020denoising,song2020score,rombach2022high} stands out as a promising approach, thanks to its impressive image synthesis capability. Technically, the DPM is a hierarchical denoising model, which gradually purifies noisy data from a standard Gaussian to generate an image. 
	In the existing literature \citep{ramesh2022hierarchical,saharia2022photorealistic,peebles2023scalable,croitoru2023diffusion}, the framework of (latent) Stable Diffusion model \citep{ramesh2022hierarchical} is a backbone technique in T2I generation via DPM. In this approach, the text prompt is encoded by a CLIP text encoder \citep{radford2021learning}, and injected into the diffusion image decoder as a condition to generate a target image (latent encoded by VQ-GAN in \citep{ramesh2022hierarchical}) that is consistent with the text prompt. Though this framework works well in practice, the working mechanism behind it, especially for the text prompt, remains to be explored. Therefore, in this paper, we systematically explore the working mechanism of stable diffusion.
	\par
	Our investigation starts from the intermediate status of the denoising generation process. Through an experiment (details are in Section \ref{sec:First Overall Shape then Details}), we find that in the early stage of the denoising process, the overall shapes of generated images (latent) are already reconstructed. In contrast, the details  (e.g., textures) are then filled at the end of the denoising process. To explain this, we notice that the overall shape (resp. semantic details) is decided by low-frequency (resp. high-frequency) signals \citep{gonzales1987digital}. We both empirically show and theoretically explain that in contrast to the high-frequency signals, the low-frequency signals of noisy data are not corrupted until the end stage of the forward noise-adding process. Therefore, its reverse denoising process firstly recovers the low-frequency signal (so that overall shape) in the initial stage, and then recovers the high-frequency part in the latter stage. 
	\par
	Following the phenomenons, we investigate the effect of encoded tokens in the text prompt of T2I generation during the two stages, where each token is encoded by an auto-regressive CLIP text encoder. The text prompt has a length of 76, and is enclosed by special tokens [\texttt{SOS}] and [\texttt{EOS}], \footnote{[\texttt{EOS}] contains the overall information in text prompt due to the auto-regressive CLIP text-encoder} at the beginning and end of the text prompt, respectively. Therefore, we categorize the tokens into three classes, i.e., [\texttt{SOS}], semantic tokens, and [\texttt{EOS}]. Notably, the special token [\texttt{SOS}] does not contain information, due to the auto-regressive encoding of the text prompt. Thus, our investigations into the influence of tokens will primarily focus on the semantic tokens and [\texttt{EOS}]. Surprisingly, we find that compared with semantic tokens, the special token [\texttt{EOS}] has a larger impact during generation. 
	\par
	Concretely, under a set of collected text prompts, we select 1000 pairs ``[\texttt{SOS}] + Prompt $A$ ($B$) + [\texttt{EOS}]$_{A(B)}$'' from it. Then, replace the special token [\texttt{EOS}]$_{A}$ in the text prompt $A$ with [\texttt{EOS}]$_{B}$ from prompt $B$ to observe the generated data under this condition. Interestingly, we find that the generated images are more likely to be aligned with text prompt $B$ (especially for the shape features) instead of $A$, so that [\texttt{EOS}] has a larger impact compared with semantic tokens. Besides that, we further find that the information in [\texttt{EOS}] is already conveyed during the early shape reconstruction stage of the denoising process. Exploring along the working stage of [\texttt{EOS}], we further verify and explain that the whole text prompts (including semantic ones) primarily work on the early denoising process, when the overall shapes of generated images are constructed. After that, the image details are mainly reconstructed by the images themselves. This phenomenon is explained by ``first shape then details'', as the injected text prompt implicitly penalizes the generated images to be consistent with it. Therefore, the penalization quickly becomes weak, when the overall shape of image is reconstructed.  
	
	% All of our empirical observations are quantitatively verified by three standard metrics CLIPScore \citep{radford2021learning,hessel2021clipscore}, BLIP-VQA \citep{li2022blip,huang2023t2i}, and MiniGPT4-CoT \citep{zhu2023minigpt,huang2023t2i}.
	\par 
	Finally, we apply our observations in one practical cases: Training-free sampling acceleration, as the text prompt works in the first stage of denoising process, we remove the textual prompt-related model propagation ($\beps_{\btheta}(t, \bx_{t}, \cC)$ in \eqref{eq:noise prediction}) during the details reconstruction stage, which merely change the generated images but save about 25\%+ inference cost. 
	% 2): Latent Consistency Model (LCM) \citep{luo2023latent} training acceleration, the LCM framework distillates the multi-step denoising process into one-step model forward propagation, where the distillation involves conducting denoising steps. As in the first application, we remove the textual prompt-related model propagation when it corresponds to the second denoising stage, during the distillation of LCM. By doing so, the trained LCM generates images comparable to the original ones but saves \yi{20}\% computational cost in training.     
	\par
	We summarize our contributions as follows. 
	\begin{enumerate}
		\item We show, during the denoising process of the stable diffusion model, the overall shape and details of generated images are respectively reconstructed in the early and final stages of it. 
		\item For the working mechanism of text prompt, we empirically show the special token [\texttt{EOS}] dominates the influence of text prompt in the early (overall shape reconstruction) stage of denoising process, when the information from text prompt is also conveyed. Subsequently, the model works on filling the details of generated images mainly depending on themselves. 
		\item We apply our observation to accelerate the sampling of denoising process 25\%+.    
	\end{enumerate}
	\section{Related Work}
	\paragraph{Diffusion Model.} In this paper, our exploration is based on the Stable Diffusion \citep{rombach2022high}, which now terms to be a standard T2I generation technique based on DPM \citep{ho2020denoising,song2020denoising}, and has been applied into various computer vision domains e.g., 3D  \citep{poole2022dreamfusion,ruiz2023dreambooth,lin2023magic3d} and video generation \citep{molad2023dreamix,chai2023stablevideo}. In practice, the goal of T2I is generating an image that is consistent with a given text injected into the cross-attention module \citep{vaswani2017attention} of the image decoder. Therefore, understanding the working mechanism of stable diffusion potentially improves the existing techniques \citep{balaji2022ediffi}. Unfortunately, to the best of our knowledge, the problem is limited explored, expected in \citep{yang2023diffusion,si2023freeu}, where they similarly observe the low-frequency signals are firstly recovered in the denoising process. However, further explanations for this phenomenon are neglected in these works. % do not further explain such phenomenon.  
	\paragraph{Influence of Tokens.} Understanding the working mechanism of encoded (by a pretrained language model) text prompt  \citep{brown2020language,radford2021learning,wei2022chain,openai2023gpt4} helps us understanding T2I generation \citep{sun2021long,krishna2022rankgen,liu2023lost}. For example, \citep{xiao2023efficient} finds that in LLM, the first token primarily decides the weights in the cross-attention map, which similarly appeared in the cross-modality text-image stable diffusion model as we observed. \citep{zhang2021counterfactual} explores the influence of individual tokens in counterfactual memorization. However, in the multi-modality models e.g., \citep{radford2021learning,li2022blip,li2023blip,luddecke2022image}, whereas the textual information interacted with the image in the cross-attention module as in stable diffusion, the working mechanism of tokens interacts with cross-modality data is limited explored, expected in \citep{balaji2022ediffi}. They find in a single case that the influence of text prompts may decrease during the denoising process, while they do not proceed to study or apply this phenomenon as in this paper. Recently, \citep{zhang2024cross} finds that the cross-attention map between the text prompt and generated images converges during the denoising process, which is also explained by our observations that the information conveyed during the first few denoising steps. Besides that, unlike ours, their observations are lack of theoretical explanation.  
	\section{Preliminaries}\label{sec:preliminaries}
	We briefly introduce the (latent) stable diffusion model \citep{ramesh2022hierarchical}, which transfers a standard Gaussian noise into a target image latent that aligns with pre-given text prompts. Here, the generated data space is a low-dimensional Vector-Quantized (VQ) \citep{esser2021taming} image latent to reduce the computational cost of generation. One may get the target natural image by decoding the generated image latent. In this paper, the original data (image latent) is denoted by $\bx_{0}$, and the encoded textual prompt (by CLIP text encoder \citep{radford2021learning}) is represented by $\cC$. The noisy data
	\begin{equation}\label{eq:noisy data}
		\small
		\bx_{t} = \sqrt{\baralpha_{t}}\bx_{0} + \sqrt{1 - \baralpha_{t}}\beps_{t},
	\end{equation}
	is used as input to diffusion model $\beps_{\btheta}$ trained by 
	\begin{equation}
		\small
		\min_{\btheta}\mE\left[\left\|\beps_{\btheta}(t, \bx_t, \cC, \emptyset) - \beps_{t}\right\|^{2}\right],
	\end{equation}
	with $0\leq t\leq T$, $\beps_{t}\sim\cN(0, \bI)$ independent of $\bx_{0}$, $\baralpha_{t}\to 0$ (resp. $\baralpha_{t}\to 1$) for $t\to 0$ (resp. $t\to T$). Here, the noise prediction model $\beps_{\btheta}(t, \bx_t, \cC, \emptyset)$ is constructed by classifier-free guidance \citep{ho2021classifier} with 
	\begin{equation}\label{eq:noise prediction}
		\small
		\beps_{\btheta}(t, \bx_t, \cC, \emptyset) = \beps_{\btheta}(t, \bx_t, \emptyset) + w\left(\beps_{\btheta}(t, \bx_t, \cC) - \beps_{\btheta}(t, \bx_t, \emptyset)\right),
	\end{equation}
	where $\beps_{\btheta}(t, \bx_t, \emptyset)$ is an unconditional generative model, and the $w \geq 0$ is guidance scale. As the model is trained to predict noise $\beps_{t}$ in $\bx_{t}$, and $\bx_{T}$ approximates a standard Gaussian, we can conduct the reverse denoising process (DDIM \citep{song2020denoising}) transfers a standard Gaussian to target image $\bx_{0}$
	\begin{equation}\label{eq:forward propogation}
		\small
		\bx_{t - 1} = \sqrt{\frac{\baralpha_{t - 1}}{\baralpha_{t}}}\bx_{t} + \left(\sqrt{\frac{1 - \baralpha_{t - 1}}{\baralpha_{t - 1}}} - \sqrt{\frac{1 - \baralpha_{t}}{\baralpha_{t}}}\right)\beps_{\btheta}(t, \bx_{t}, \cC, \emptyset). 
	\end{equation}
	\par
	Finally, the diffusion model (usually UNet) takes the text prompt as input to the cross-attention module in each basic block \footnote{The model is stacked by such basic blocks sequentially with residual module, self-attention module, and cross-attention module in each of it.} of diffusion model with output ${\rm Attention}(Q, K, V) = \mathrm{Softmax}(QK^{\top} / \sqrt{d})V$ ($d$ is dimension of image feature), where $\phi(\bx_{t})$ is the feature of image, and 
	\begin{equation}\label{eq:cross attention}
		\small
		Q = W_{Q}\phi(\bx_{t}); K = W_{K}\cC; V = W_{V}\cC.
	\end{equation} 
	\section{First Overall Shape then Details}\label{sec:First Overall Shape then Details}
	In this section, we first explore the image reconstruction process of the stable diffusion model. As noted in \citep{balaji2022ediffi}, the generated image's overall shape is difficult to be alterted in the final stage of the denoising process. Inspired by this observation, and note that the low-frequency and high-frequency signals of image determine its overall shape and details, respectively \citep{gonzales1987digital}. We theoretically and empirically verify that the denoising process recovers the low and high-frequency signals in its initial and final stages, respectively, which explains the phenomenon of  \emph{``\textbf{first overall shape then details''}}.  
	\subsection{Two Stages of Denoising Process}\label{sec:Two Stages of Denoising Process}
	\begin{figure*}[t!]
		\centering
		\vspace{-0.6in}
		\subfloat[Cross-Attention Map\label{fig:cross-attention map}]{\includegraphics[scale=0.35]{./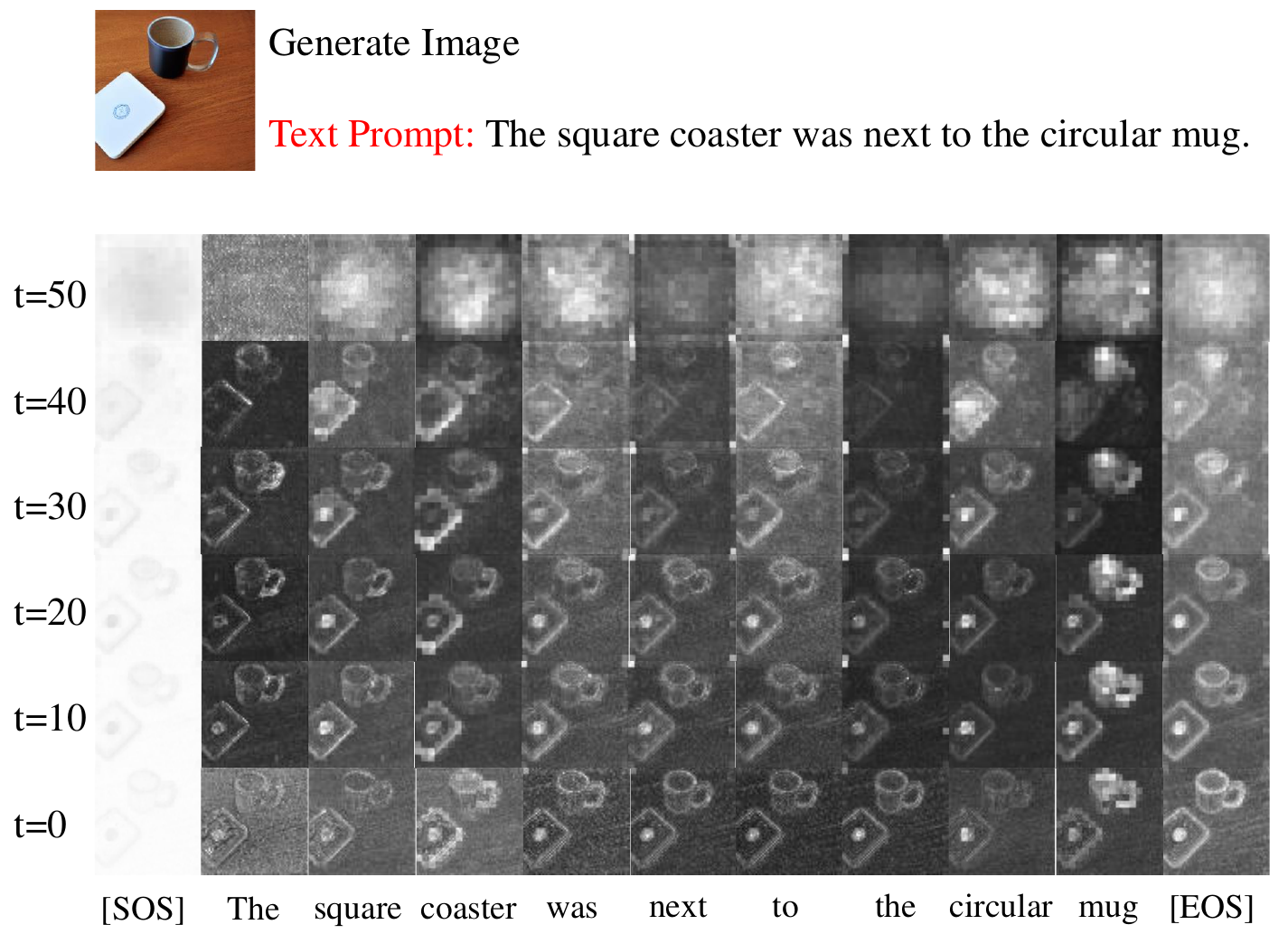}}
		\subfloat[Convergence of Cross-Attention Map\label{fig:cross-attention map ratio}]{\includegraphics[scale=0.3]{./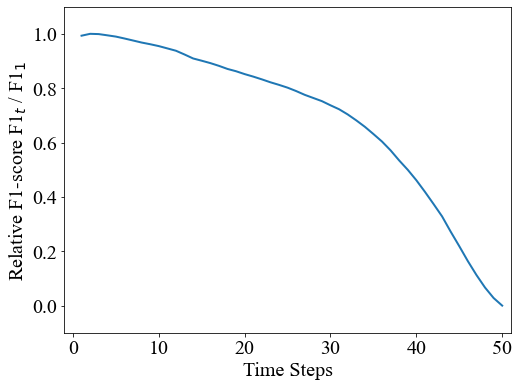}}
		
		\caption{Figure \ref{fig:cross-attention map} is the averaged cross-attention over denoising steps. The two generated images are on the top, and the weights in cross-attention maps of each tokens are on the bottom with whiter pixels correspond to larger weights in cross-attention map. Figure \ref{fig:cross-attention map ratio} is obtained by taking average over tokens and prompts in \texttt{PromptSet}, which compares the shapes of cross-attention map and final generated images, Measured by relative F1-score $\text{F1}_{t} / \text{F1}_{1}$ over different denoising steps.}
		\vspace{-0.2in}
	\end{figure*}
	%	We will start our analysis from the intermediate status during the whole denoising process. 
	\paragraph{Settings (\texttt{PromptSet}).} As in \citep{huang2023t2i}, we use 1600 prompts following the template ``a \{attribute\} \{noun\}'', with the attribute as an adjective of color or texture. We create 800 text prompts respectively under each of the two categories of attributes. Besides that, we add another extra 1000 complex natural prompts in \citep{huang2023t2i} without a predefined sentence template. These prompts consist of the text prompts set (abbrev \texttt{PromptSet}) we used. The classes of nouns, colors, and textures are respectively 230, 33, and 23 in these prompts. In this paper, we generate images under \texttt{PromptSet} by Stable Diffusion v1.5-Base \citep{ramesh2022hierarchical}. Finally, without specification, we use 50 steps DDIM sampling \citep{song2020denoising}. 
	%	We use the Stable Diffusion v1.5 \citep{ramesh2022hierarchical} to generate 1600 images under these text prompts \footnote{In the main part of this paper, we explore the text prompt follows the given template. The experiments under complicated text prompts are in Appendix \ref{app:Complex Text Prompts}, and the results are similar to the ones under simple text prompts.}.
	\par
	From \citep{tang2023daam}, though stable diffusion generates encoded VQ image latents \citep{esser2021taming}. These latents preserve semantic information transformed by text prompt through cross-attention module \eqref{eq:cross attention}. Notably, in the cross-attention module, the pixel is a weighted sum of token embedding with cross-attention map $\mathrm{Softmax}(QK^{\top} / \sqrt{d})$ as weights. The weights reveal the semantic information of token, as they are the correlations between image query $Q$ and textual key $K$. To check the correlation, we visualize the averaged cross-attention map over all layers of model $\beps_{\btheta}$ under different time steps $t$, from 50 to 1. 
	\par
	Interestingly, the cross-attention map of each token already has a semantic shape in the early stage of the denoising process, e.g., for $t = 40$ in example Figure \ref{fig:cross-attention map}. This can hold only if the overall shape of the image is constructed in this early stage, so that each pixel can correspond to the correct token. To further investigate this, we compute the average cross-attention map of each token under the aforementioned \texttt{PromptSet}. We compare the shape of the cross-attention map and the final generated image quantitatively by transforming them into canny images \citep{gonzales1987digital} and computing the F1-score \citep{fscore} ($\text{F1}_{t}$ for each $t$) between these canny images. To compare the difference over different time steps more clearly, we plot the relative F1-score $\text{F1}_{t} / \text{F1}_{1}$ ($t=1$ the image has been recovered). The result in Figure \ref{fig:cross-attention map ratio} shows the shape of the cross-attention map rapidly close to the ones of the generated image in the early stage of denoising, which is consistent with our speculation and the result in \citep{zhang2024cross}, where they conclude that the cross-attention map will converge during the denoising process.  
	% That says, during the denoising process, the shape of cross-attention map rapidly converged to the shape of the final stage.             
	\subsection{Frequency Analysis}

	\begin{figure*}[t!]
		\centering
		\vspace{-0.5in}
		\subfloat[Noisy data and its high, low frequency parts\label{fig:corrupted data}]{\includegraphics[scale=0.25]{./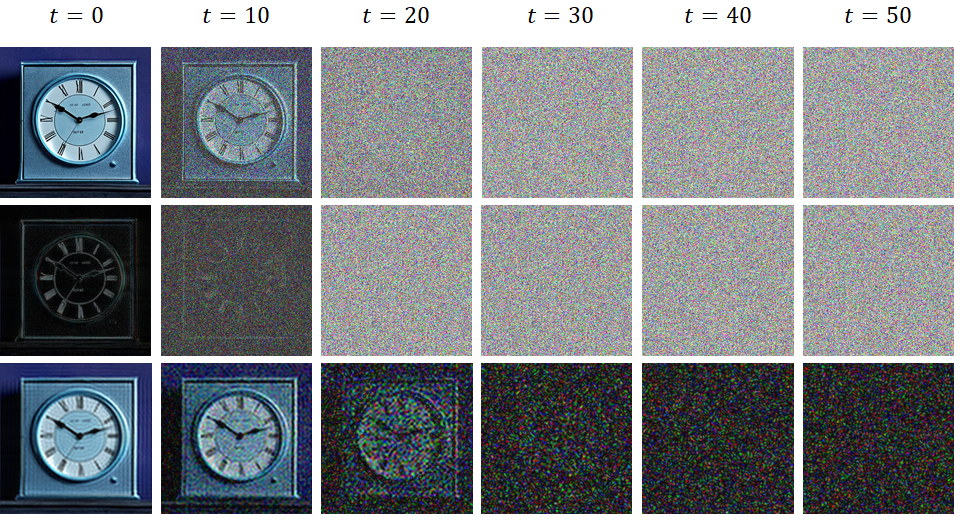}}
		\subfloat[Norm of features  $\sqrt{\baralpha_{t}}\bx_{0}$ and $\sqrt{1 - \baralpha_{t}}\beps_{t}$\label{fig:norm}]{\includegraphics[scale=0.22]{./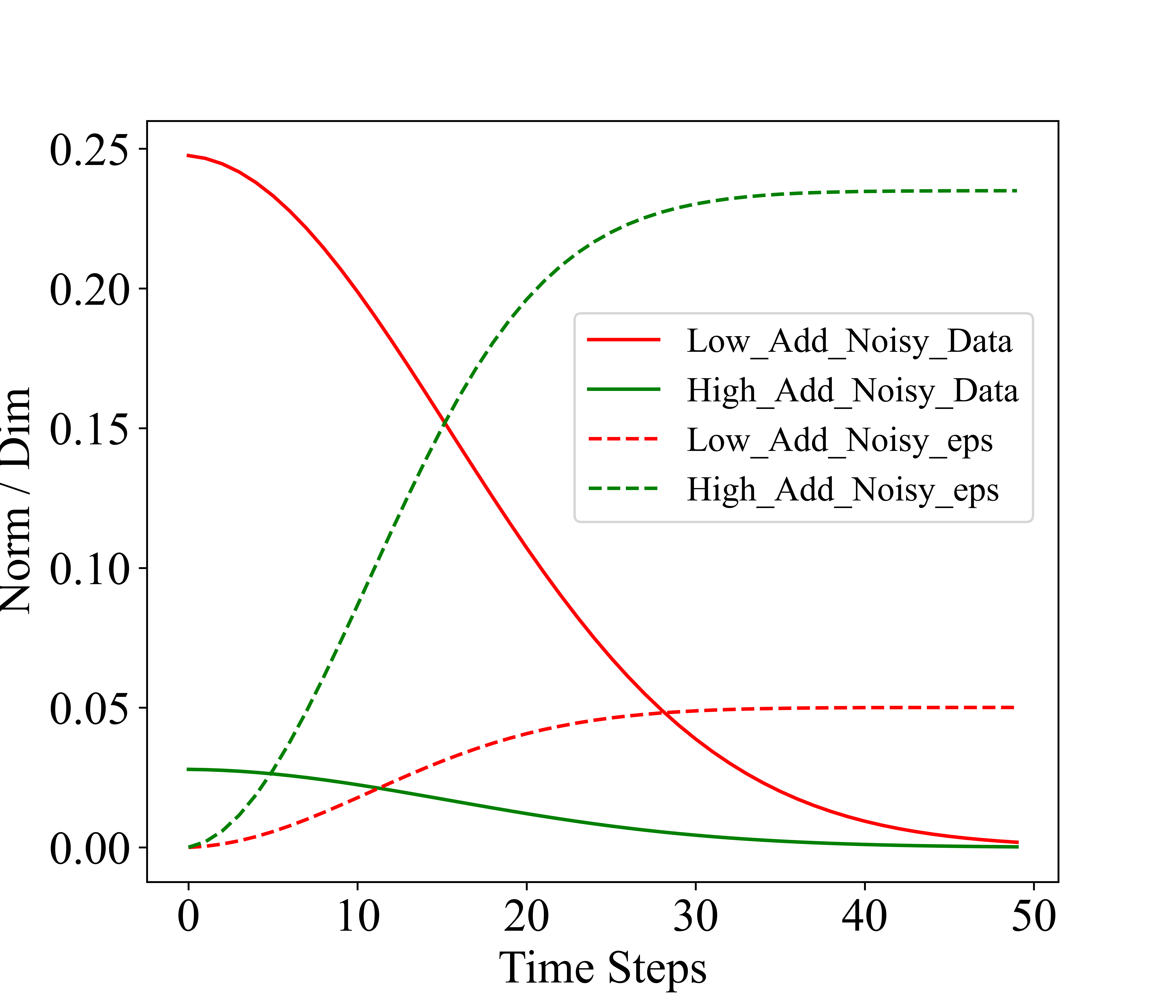}}
		\subfloat[Ratio of high / low frequency parts variation\label{fig:ratio}]{\includegraphics[scale=0.22]{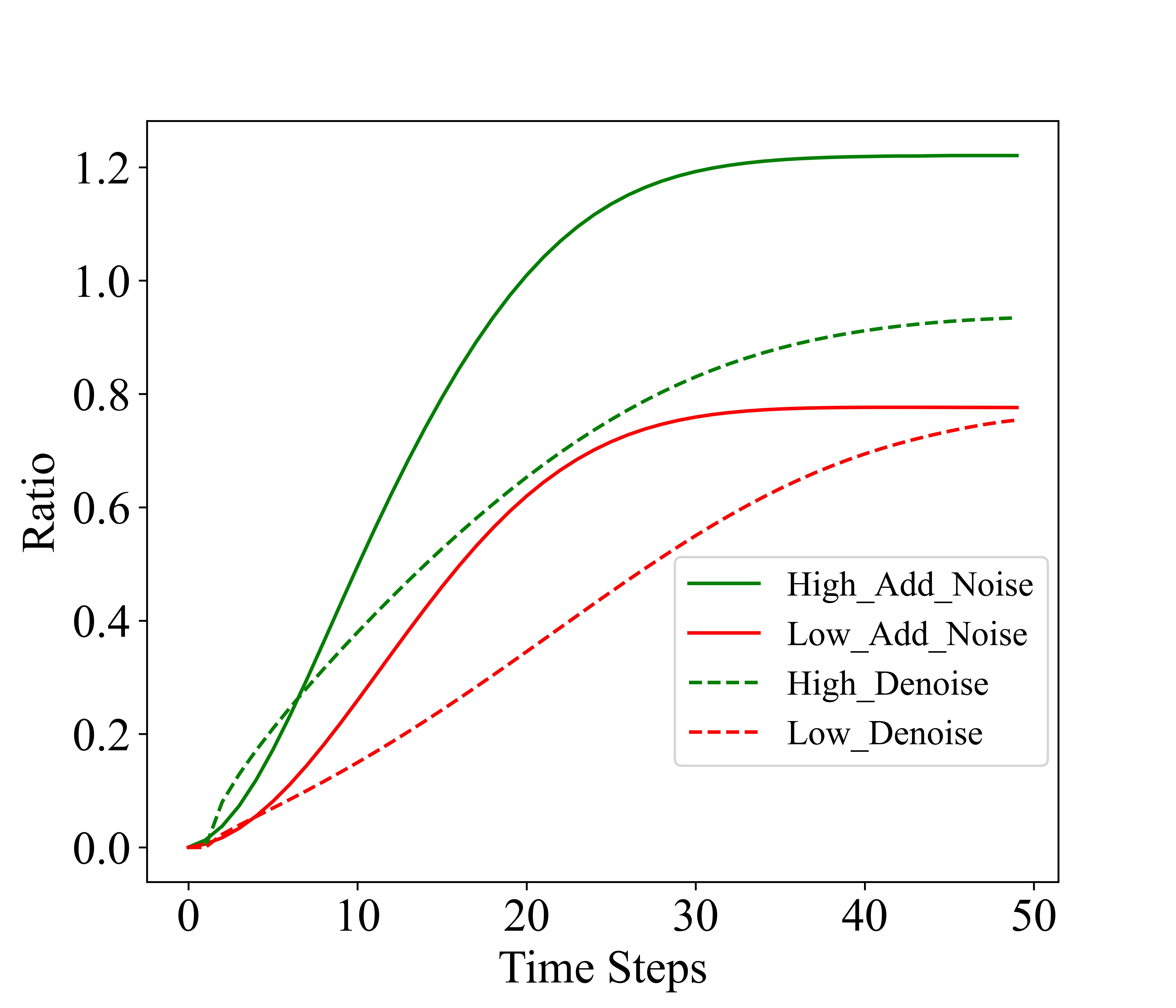}}
		
		\caption{Figure \ref{fig:corrupted data} visualizes the completed noisy data and its high-frequency, and low-frequency parts over different time steps, listed from top to bottom. Figures \ref{fig:norm} and \ref{fig:ratio} measure the low/high-frequency signals of $\bx_{t}$. In Figure \ref{fig:norm}, ``Low\_Add\_Noisy\_Data/eps'' means the norm of $\sqrt{\baralpha_{t}}\bx_{0}^{\rm low}$ and $\sqrt{1 - \baralpha_{t}}\beps_{t}^{\rm low}$, vise versa for ``High\_...''. On the other hand, Figure \ref{fig:ratio} measures the variation ratio of high/low frequency parts of images during the noising/denoising process. For example, ``High\_Add\_Noise'' represents $\|\bx_{t}^{\rm high} - \bx_{0}^{\rm high}\| / \|\bx_{0}^{\rm high}\|$ during noising process.}
		%		\vspace{-0.1in}
	\end{figure*}
	% \begin{figure}[t!]
		% 	\centering
		% 	\includegraphics[scale=0.37]{./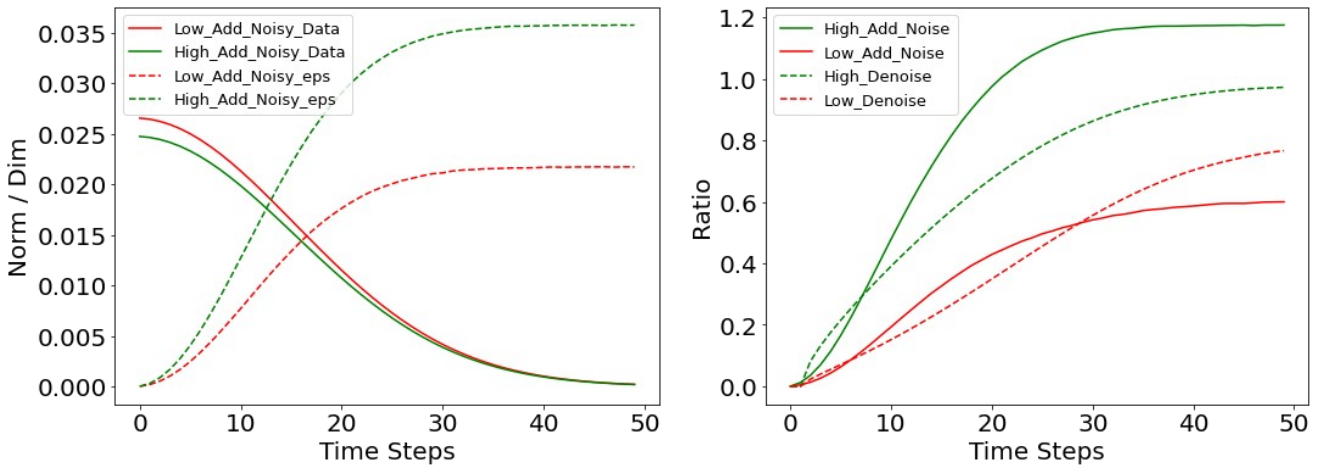}
		% 	\caption{Our results are measured under the image latent $\bx_{t}$. The left and right figures are respectively the averaged norm and ratio. For averaged norm, the solid line and dash line are respectively corresponded to  $\sqrt{\baralpha_{t}}\bx_{0}$ and $\sqrt{1 - \baralpha_{t}}\beps_{t}$. The ratio is measured under $\bx_{t}$.}
		% 			% \vspace{-0.2in}
		% 	\label{fig:norm_vs_ratio}
		% \end{figure}
	To further explain the above phenomenon, we refer to the frequency-signal analysis. It has been well explored that the low-frequency signals represent the overall smooth areas or slowly varying components of an image (related to the overall shape). On the other hand, the high-frequency signals correspond to the fine details or rapid changes in intensity (related to attributes like textures) \citep{gonzales1987digital}. Thereafter, to explain the ``first overall shape then details'' in the denoising process, it is natural to refer to the variations in frequency signals of images during the denoising process. 
	\par 
	Mathematically, suppose the clean data (image latent) $\bx_{0}$ has $M\times N$ dimensions for each channel with $\bx_{t}$ defined in \eqref{eq:noisy data}. Then the Fourier transformation $F_{\bx_{t}}(u, v)$ (with $u \in [M], v\in[N]$) of $\bx_{t}$ is 
	\begin{equation}\label{eq:frequency of xt}
		\begin{aligned}
			F_{\bx_{t}}(u, v) & = \frac{1}{MN}\sum\limits_{k = 0}^{M - 1}\sum\limits_{l = 0}^{N - 1}\bx_{t}^{kl}\exp\left(-2\pi \ri\left(\frac{ku}{M} + \frac{lv}{N}\right)\right) \\
			& = \sqrt{\baralpha_{t}}F_{\bx_{0}}(u, v) + \sqrt{1 - \baralpha_{t}}F_{\beps_{t}}(u, v),
		\end{aligned}
		\small
	\end{equation}
	where $\sqrt{-1} = \ri$, and $\bx_{t}^{kl}$ is the $(k, l)$ component of $\bx_{t}$. As we do not now the distribution of $\bx_{0}$, we explore the $F_{\beps_{t}}(u, v)$ in sequel. The result is in the following proposition proved in Appendix \ref{app:proofs of proposition}.
	\begin{restatable}{proposition}{frequency}\label{pro:frequency}
		For all $u\in[M], v\in[N]$, with high probability, the complex number $F_{\beps_{t}}(u, v)$ satisfies 
		\begin{equation}
			\small
			\left\|F_{\beps_{t}}(u, v)\right\|^{2} \approx \cO\left(\frac{1}{MN}\right). 
		\end{equation}
	\end{restatable}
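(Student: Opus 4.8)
The plan is to treat $F_{\beps_{t}}(u,v)$ as a complex Gaussian and control it by a second-moment identity followed by standard Gaussian concentration and a union bound. First I would split it into real and imaginary parts: writing $\theta_{kl} := 2\pi(ku/M + lv/N)$, we have
\[
\mathrm{Re}\,F_{\beps_{t}}(u,v) = \frac{1}{MN}\sum_{k,l}\beps_{t}^{kl}\cos\theta_{kl}, \qquad \mathrm{Im}\,F_{\beps_{t}}(u,v) = -\frac{1}{MN}\sum_{k,l}\beps_{t}^{kl}\sin\theta_{kl},
\]
each of which, being a fixed linear combination of the i.i.d.\ $\cN(0,1)$ entries $\beps_{t}^{kl}$, is a centered Gaussian. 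Their variances are $\sigma_R^2 = \frac{1}{(MN)^2}\sum_{k,l}\cos^2\theta_{kl}$ and $\sigma_I^2 = \frac{1}{(MN)^2}\sum_{k,l}\sin^2\theta_{kl}$, and summing the Pythagorean identity over all $k,l$ gives $\sigma_R^2 + \sigma_I^2 = \frac{MN}{(MN)^2} = \frac{1}{MN}$, regardless of $(u,v)$. (Generically the cross terms vanish and $\sigma_R^2=\sigma_I^2=\frac{1}{2MN}$, but this refinement is not needed.) In particular $\mE\|F_{\beps_{t}}(u,v)\|^2 = \sigma_R^2 + \sigma_I^2 = \frac{1}{MN}$, which already establishes the claim ``in expectation.''

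Next I would upgrade this to a high-probability bound. Since $\|F_{\beps_{t}}(u,v)\|^2 = \sigma_R^2 Z_1^2 + \sigma_I^2 Z_2^2$ for suitable standard normals $Z_1,Z_2$, and $\sigma_R^2,\sigma_I^2 \le \frac{1}{MN}$, we get $\|F_{\beps_{t}}(u,v)\|^2 \le \frac{1}{MN}(Z_1^2 + Z_2^2)$, where $Z_1^2+Z_2^2$ is stochastically dominated by a $\chi^2_2$ variable whose tail satisfies $\bbP\big(Z_1^2+Z_2^2 > s\big) \le e^{-s/2}$; alternatively one may invoke the sub-exponential (Hanson--Wright) tail for quadratic forms in Gaussians. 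Hence for each $(u,v)$, $\bbP\big(\|F_{\beps_{t}}(u,v)\|^2 > s/(MN)\big) \le e^{-s/2}$. Finally, taking a union bound over the $MN$ frequency pairs and choosing $s = 2\log(MN) + 2\log(1/\eta)$ yields that, with probability at least $1-\eta$, simultaneously for all $u\in[M], v\in[N]$,
\[
\|F_{\beps_{t}}(u,v)\|^2 \le \frac{2\log(MN)+2\log(1/\eta)}{MN} = \cO\!\left(\frac{\log(MN)}{MN}\right),
\]
which is the rigorous version of the displayed estimate, the logarithmic factor being the price of uniformity over all $(u,v)$ and exactly what the informal symbol ``$\approx$'' absorbs.

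I do not anticipate a serious obstacle: the argument is a routine Gaussian second-moment computation plus concentration and a union bound. The only points requiring care are (i) correctly tracking the variance split between the real and imaginary parts, noting that only their sum matters and that this sum equals $\frac{1}{MN}$ for every $(u,v)$ (so the degenerate ``edge'' frequencies with $2u\equiv 0$ or $2v\equiv 0$ cause no trouble); and (ii) being explicit that the ``with high probability, for all $u,v$'' statement necessarily carries the $\log(MN)$ factor, so the bound is tight only up to this logarithmic term. If one only wants the statement ``for a fixed $(u,v)$, with high probability,'' the union bound is unnecessary and the clean $\cO(1/(MN))$ follows directly from the per-coordinate tail.
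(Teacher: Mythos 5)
Your proposal is correct, and it reaches the same conclusion as the paper by a closely related but not identical route. The paper works with the rank-one quadratic form $\left\|F_{\beps}(u)\right\|^{2} = \frac{1}{M^{2}}\beps^{\top}\Lambda\textbf{1}\textbf{1}^{\top}\bar{\Lambda}\bar{\beps}$ (in a one-dimensional reduction), performs a change of variables by a matrix built from $\bar{\Lambda}$ to identify the quantity with $\frac{1}{M}(\by^{1})^{2}$, i.e.\ a scaled $\chi^{2}_{1}$, and then applies Bernstein's inequality for sub-exponential variables, obtaining $\frac{1}{M}\pm\frac{1}{M}\sqrt{8\log(2M/\delta)}$ with probability $1-\delta/M$. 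You instead split $F_{\beps_{t}}(u,v)$ into real and imaginary parts, observe that the sum of their variances is exactly $\frac{1}{MN}$ for every $(u,v)$, dominate the squared modulus by $\frac{1}{MN}\chi^{2}_{2}$, and finish with a union bound over the $MN$ frequencies; both arguments therefore carry the same logarithmic price of uniformity inside the $\cO(\cdot)$. Your decomposition is arguably the more careful one: the paper's change of variables treats a matrix with complex entries as an orthogonal transformation preserving the law of the real Gaussian vector, which is only heuristic (for generic $(u,v)$ the squared modulus is distributed as $\frac{1}{2M}\chi^{2}_{2}$ rather than $\frac{1}{M}\chi^{2}_{1}$), whereas tracking only the trace of the covariance of $(\mathrm{Re},\mathrm{Im})$ sidesteps this entirely and handles the degenerate frequencies uniformly. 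One small point of precision on your side: the domination of $Z_{1}^{2}+Z_{2}^{2}$ by a $\chi^{2}_{2}$ tail requires either independence of $Z_{1},Z_{2}$ or a diagonalization of the $2\times 2$ covariance (writing the squared modulus as $\lambda_{1}W_{1}^{2}+\lambda_{2}W_{2}^{2}$ with independent $W_{i}$ and $\lambda_{1}+\lambda_{2}=\frac{1}{MN}$); in this particular setting the real and imaginary parts are in fact uncorrelated, and your fallback to the sub-exponential quadratic-form bound covers the general case, so the gap is cosmetic.
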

	This proposition indicates that under large image size ($MN$), the strength of frequency signals (no matter low or high) of standard Gaussian are equally close to zero. Thus, the frequency signal of $\bx_{0}$ in noisy data $\bx_{t}$ is mainly corrupted by the shrink factor $\baralpha_{t}$ due to \eqref{eq:frequency of xt}, instead of the noise in it. 
	\par
	However, as visualized in Figure \ref{fig:corrupted data} \footnote{Please note that the $t$ in Figure \ref{fig:corrupted data} and the other parts of this paper refers to the corresponding $t$-th time step of 50-steps DDIM sampling, which corresponds to $20t$ steps in \citep{ho2020denoising}}, in contrast to high-frequency part of image, the image's low-frequency parts \footnote{We distinguish low frequency by threshold 20\%, i.e., the lowest 20\% parts of spectrum are low-frequency.} are more robust than the ones of high-frequency. For example, for $t=20$ in Figure \ref{fig:corrupted data}, the shape of the clock is still perceptible in the low-frequency part of the image. \footnote{This fact is also observed in \citep{si2023freeu,yang2023diffusion}, while they leverage this phenomenon to refine the architecture of UNet, and the rationale behind the phenomenon is unexplored.} If this fact is generalized to image latent, then it explains the two stages of generation as observed in Section \ref{sec:Two Stages of Denoising Process}. Because the low-frequency parts are not corrupted until the end of the adding noise process. Then, it will be recovered at the beginning of the reverse denoising process. 
	\par
	To investigate this, in Figures \ref{fig:norm} and \ref{fig:ratio}, we plot the averaged results over time steps of variation of low/high-frequency parts in images generated by \texttt{PromptSet}. In these figures, $\bx_{t}^{\rm low}$ is the low-frequency part of $\bx_{t}$ and  vice-versa for high-frequency part $\bx_{t}^{\rm high}$. 
	%	of $\sqrt{\baralpha_{t}}\bx_{0}$ and $\sqrt{1 - \baralpha_{t}}\beps_{t}$ over various time steps. To make the comparison more clear, for each noisy data $\bx_{t}$, we plot the variants of ratio $\|\bx_{t}^{\rm low} - \bx_{0}^{\rm low}\| / \|\bx_{0}^{\rm low}\|$ and $\|\bx_{t}^{\rm high} - \bx_{0}^{\rm high}\| / \|\bx_{0}^{\rm high}\|$ over $t$, respectively. Here $\bx_{t}^{\rm low}$ is the low-frequency part of $\bx_{t}$ and vice-versa for high-frequency $\bx_{t}^{\rm high}$. The results are obtained by taking average over the data generated under \texttt{PromptSet}, and summarized in Figure \ref{fig:norm_vs_ratio}. 
	As can be seen, in Figure \ref{fig:ratio}, the behavior of $\bx_{t}$ is similar under add/de noise processes, and the reconstruction of low-frequency signals is faster than the high-frequency signals. On the other hand, by comparing ``Low\_...\_Data'' ( $\|\sqrt{\baralpha_{t}}\bx_{0}^{\rm low}\|$) and ``High\_...\_Data'' ($\|\sqrt{\baralpha_{t}}\bx_{0}^{\rm high}\|$) in Figure \ref{fig:norm}, we observe the strength of high-frequency signals are significantly lower than the low-frequency signals, which seems to be a property adopted from natural image \citep{gonzales1987digital}. However, the relationship oppositely holds for Gaussian noise, which is implied by Proposition \ref{pro:frequency}, as the frequency signals of noise $\beps_{t}$ under each spectrum are all close to zero, while the high-frequency parts contain 80\% spectrum, so that $\beps_{t}^{\rm high}$ is larger than the $\beps_{t}^{\rm low}$. 
	\par
	These observations explain the phenomenon ``first overall shape then details''. Since the low-frequency parts of the image (decide overall shape) are not totally corrupted until the end of the noising process. Thus, they will be firstly recovered during the reverse denoising process, while the phenomenon does not hold for low-frequency parts of the image, as they are quickly corrupted during the noising process, so they will not be recovered until the end of denoising.   
	
	\section{The Working Mechanism of Text Prompt}\label{sec:The Working Mechanism of Text Prompt}
	We have verified that the denoising process has two stages i.e., ``first overall shape then details''. Next, we explore the working mechanism of text prompts during these stages. Our main observations are two fold, 1): The special tokens [\texttt{EOS}] dominate the influence of text prompt. 2): The text prompt mainly works on the first overall shape reconstruction stage of the denoising process. 
	\subsection{[\texttt{EOS}] Contains More Information}\label{sec:eos contains more information}
	In T2I diffusion, the text prompt is encoded by auto-regressive CLIP text encoder, with semantic tokens (SEM) enclosed with special tokes [\texttt{SOS}] and [\texttt{EOS}]. For such three classes of tokens, as the information in these tokens is conveyed by the cross-attention module, we first compute the averaged weights over pixels in the cross-attention map for each class. The weights are computed by taking the average over \texttt{PromptSet} and presented in Figure \ref{fig:token_cross_atten}. As can be seen, the weights of [\texttt{SOS}] are significantly larger than the other classes. However, due to the CLIP text encoder is an auto-regressive model, [\texttt{SOS}] does not contain any semantic information. Therefore, we conclude that the influence of [\texttt{SOS}] is mainly adjusting the whole cross-attention map i.e., weights on the other tokens. To further verify this conclusion, we conduct experiments in Appendix \ref{app:eos Substitution}. A similar phenomenon is observed in single-modality LLM \citep{xiao2023efficient}. As the information of text prompt is conveyed by semantic tokens and [\texttt{EOS}], we will focus on them instead of [\texttt{SOS}] in the sequel.    
	\par
	As both SEM and [\texttt{EOS}] contain the semantic information in the text prompt, we first explore which of them has larger impact on T2I generation. To this end, we select 3000 pairs of text prompts from \texttt{PromptSet} (2000 pairs follow the template, the other 1000 pairs have complex prompts), where the two text prompts are represented as ``[\texttt{SOS}] + Prompt $A$ ($B$) + [\texttt{EOS}]$_{A(B)}$''. For each pair, we switch their [\texttt{EOS}] to construct the new text prompt pairs as ``[\texttt{SOS}] + Prompt $A$ ($B$) + [\texttt{EOS}]$_{B(A)}$''. 
	\par
	We examine the generated images under these artificially constructed text prompts (namely \texttt{Switched-PromptSet} (\texttt{S-PromptSet})). We call $A$ from Prompt$_{A}$ as ``source'' and $B$ from [\text{EOS}]$_{B}$ as ``target'' for ``[\texttt{SOS}] + Prompt $A$ + [\texttt{EOS}]$_{B}$'', and vice versa. For the generated images under these prompts, we measure their alignments with the source and target prompts, respectively. The used metrics are the three standard ones in measuring text-image alignment: CLIPScore \citep{radford2021learning,hessel2021clipscore}, BLIP-VQA \citep{li2022blip,huang2023t2i}, and  MiniGPT4-CoT \citep{zhu2023minigpt,huang2023t2i} (details are in Appendix \ref{app:Text-Image Alignment Metrics}).    
	%	Therefore, we will examine the impact of these special tokens, as well as the semantic tokens in the text prompt. Note that the textual information is injected by the cross-attention module \eqref{eq:cross attention}, which outputs a weighted sum of token features, as explained in Section \ref{sec:Two Stages of Denoising Process}. We will explore the weights and features of the tokens separately. 
	\par
	
	\begin{figure}[t]
		\vspace{-0.5in}
		\centering
		\includegraphics[scale=0.5]{./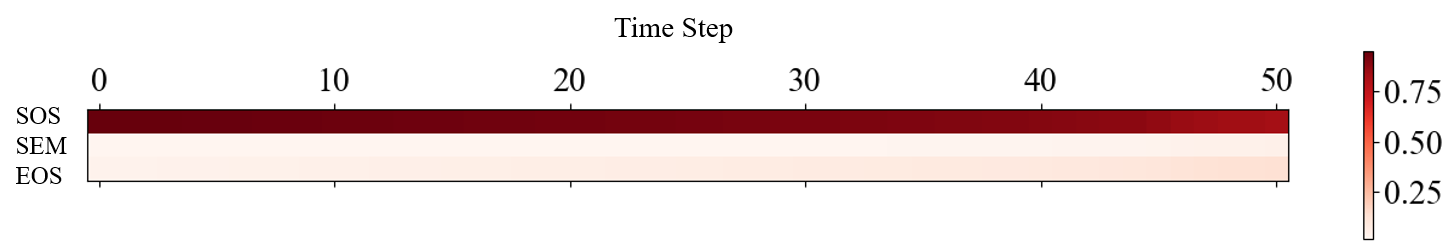}
		\caption{Averaged weights in cross-attention map over pixels of three classes of tokens. For each prompt in \texttt{PromptSet}, the result is obtained by taking average over tokens in each class. The final result is the average over \texttt{PromptSet}. Notably, the weights on [\texttt{SOS}] are all larger than 0.9.}
		\vspace{-0.2in}
		\label{fig:token_cross_atten}
	\end{figure}
	
	\begin{figure}[t!]
		%		\vspace{-0.5in}
		\centering
		\includegraphics[width=0.6\columnwidth]{./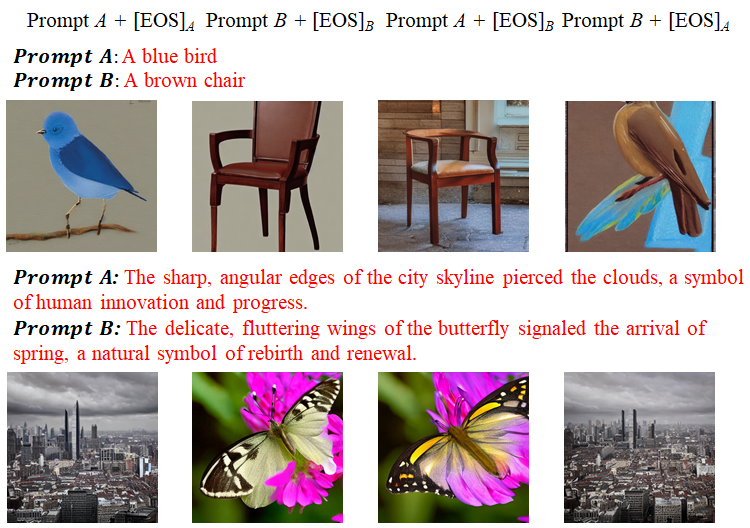}
		\caption{Images under prompts from \texttt{S-PromptSet} with switched [\texttt{EOS}]. The objects are consistent with the ones conveyed by [\texttt{EOS}], while some information in semantic tokens is still conveyed.}
		\label{fig:eos substitbution generated image}
		\vspace{-0.2in}
	\end{figure} 
	
	%	\subsection{[\texttt{EOS}] Decides the Overall Information}\label{sec:Decides the Overall Information}
	
	%	In this subsection, we explore the impact of the  special token [\texttt{EOS}]. It has been shown in Section \ref{sec:First Overall Shape then Details} that the overall shape is firstly decided during the denoising process. The goal of this subsection is to explore the relationship between this phenomenon with [\texttt{EOS}], as it contains more information than semantic tokens.  
	The results are in Table \ref{tbl:eos substitution alignment}. Surprisingly, the generated images under the constructed text prompts are more likely to be aligned with the target prompt instead of the source prompt. That says, even with prefixed irrelevant semantic tokens, the information contained in [\texttt{EOS}] dominates the denoising process (especially for overall shape) as in Figure \ref{fig:eos substitbution generated image}. Thus, we conclude that the special tokens [\texttt{EOS}] have a larger impact than semantic tokens in prompt during T2I generation. We have two speculations about this phenomenon. 1): Owing to the auto-regressive encoded text prompt, unlike semantic tokens, [\texttt{EOS}] contains complete textual information, so that it decides the pattern of the generated image. 2): The number of [\texttt{EOS}] is usually larger than semantic tokens, as the prompt is enclosed by [\texttt{EOS}] to length 76. An ablation study in Appendix \ref{app:Ablation Study on Nubmer of eos} verifies this speculation.
	\par
	In summary, our conclusion in this subsection can be summarized as: \textbf{\emph{In T2I generation, the special token [\texttt{EOS}] decides the overall information (especially shape) of the generated image.}}
	\begin{remark}
		For the generated images under \texttt{S-PromptSet}, we find some information in semantic tokens is also conveyed, especially for the attribute information in it, e.g., ``brown'' color in the last image of the first row in Figure \ref{fig:eos substitbution generated image}. We explore this in Appendix \ref{sec:Conveyed Information in Semantic Tokens} and explain this as: unlike noun information, attributes in semantic tokens may not conflict with the contained information in [\texttt{EOS}] (which quickly decides the overall shape of the generated image), so that has potential to be conveyed.      
	\end{remark}
	\begin{figure*}[t!]
		\centering
		\vspace{-0.5in}
		\includegraphics[scale=0.5]{./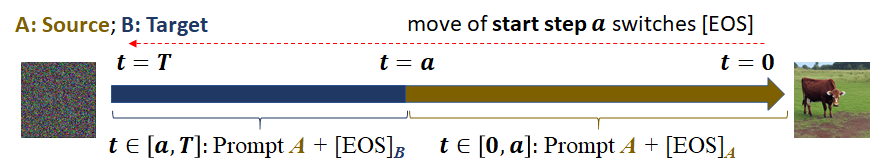}
		%    	 \vspace{-0.2in}
		\caption{Desnoising process under text prompt with switched [\texttt{EOS}] in $[a, 50]$.}
		\label{fig:switches eos}
		\vspace{-0.2in}
	\end{figure*}
	\begin{figure}[t!]
		\begin{minipage}[t!]{0.33\linewidth}
			\centering
			%		\vspace{-0.1in}
			%		\vspace{-0.1in}
			\captionof{table}{The alignment of generated image with its source and target prompts. The prompts are constructed with switched [\texttt{EOS}].}
			\label{tbl:eos substitution alignment}
			% \vspace{0.1in}
			\scalebox{0.7}{
				\begin{tabular}{l|cc}
					\hline
					\diagbox{Alignment}{Prompt} &  Source &  Target            \\
					\hline
					Text-CLIPScore $\uparrow$ &  0.2363  &  \textbf{0.2758} \\
					BLIP-VQA$\uparrow$  &     0.3325      &   \textbf{0.4441}   \\
					MiniGPT-CoT$\uparrow$  &  0.6473           &\textbf{0.7213}  \\
					\hline
			\end{tabular}}
		\end{minipage}
		\begin{minipage}[t!]{.33\textwidth}
			\includegraphics[scale=0.28]{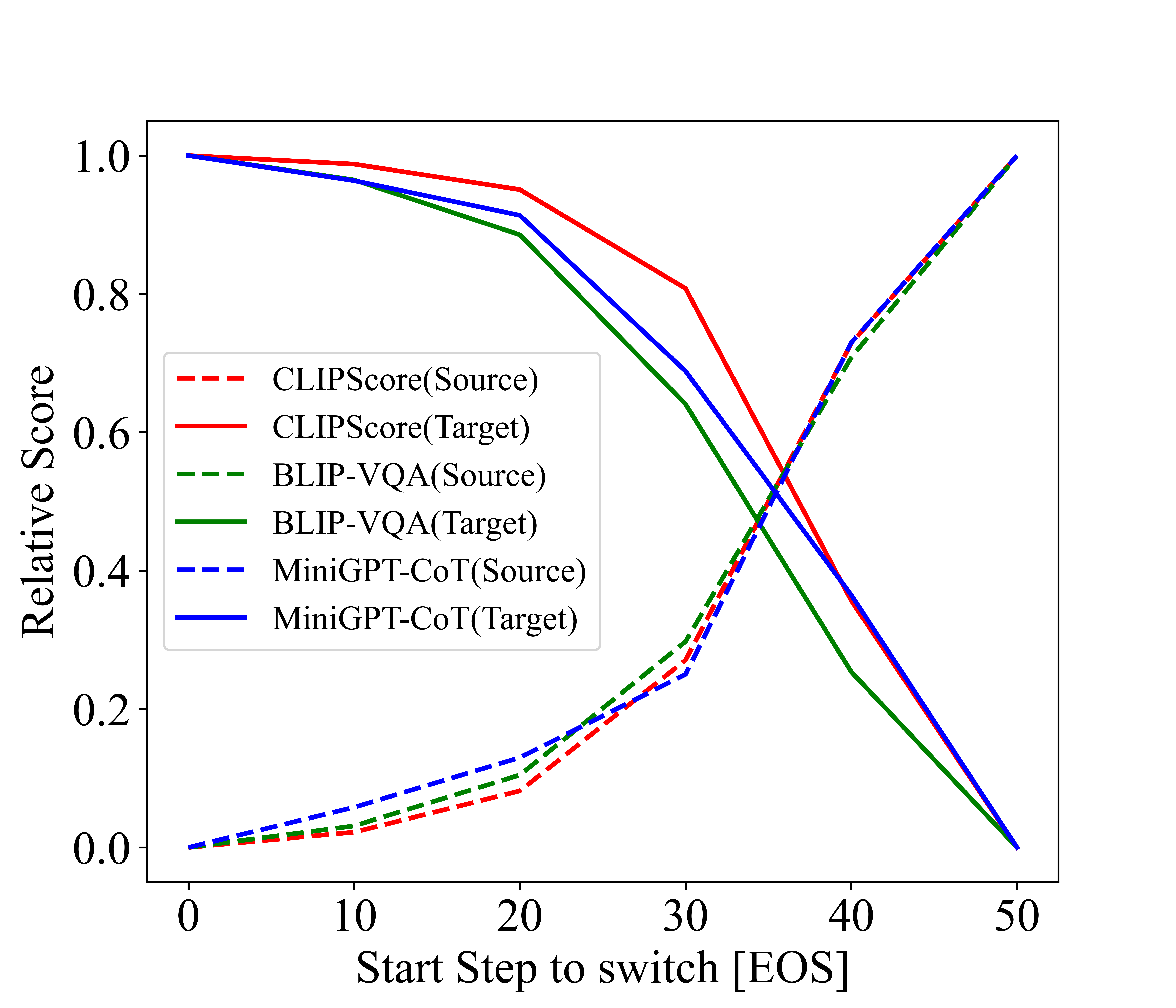}
		\end{minipage}
		\begin{minipage}[t!]{.33\textwidth}
			\caption{Relative text-image alignments (``current minus worst'' over ``best minus worst'') with source (or target) prompt under switched [\texttt{EOS}] (substitution in [Start Step, 50], Figure \ref{fig:switches eos}). Alignments with source and target prompts are respectively solid and dot lines.}
			\label{fig:eos_50_steps}
		\end{minipage}
		\vspace{-0.2in}
	\end{figure}
	
	\subsection{The Text Prompt Mainly Working on the First Stage}\label{sec:the influence of text prompt}
	In Section \ref{sec:Two Stages of Denoising Process}, we have conclude that the denoising process is divided into two stages ``first overall shape then details''. Next, we explore the relationships between text prompts and the two stages. 
	We start with special tokens [\texttt{EOS}] which contain major information in T2I generation. During the whole 50 denoising steps of T2I generation under prompts from \texttt{S-PromptSet}, we vary the starting point of substituting [\texttt{EOS}] i.e., the used text prompt is ``[\texttt{SOS}] + Prompt $A$ + [\texttt{EOS}]$_{B}$ (resp. [\texttt{EOS}]$_{A}$) for $t\in [\text{Start Step}, 50]$ (resp. $t\in [0, \text{Start Step}]$) with $\text{``Start Step''}\in [0, 50]$, i.e., Figure \ref{fig:switches eos}. We compare the alignments of generated images with source / target prompts as in Figure \ref{fig:eos_50_steps}. 
	\par
	In Figure \ref{fig:eos_50_steps}, the alignment with the target prompt slightly decreases, until the ``Start Step'' of substitution close to 50. This shows that the information in [\texttt{EOS}] has been conveyed during the first few steps of the denoising, which is the overall shape reconstruction stage according to Section \ref{sec:First Overall Shape then Details}.
	\par
	Following the revealed working stage of \texttt{[EOS]}, we explore whether the whole text prompt also works in this stage. If so, the T2I generation will only depends on $\beps_{\btheta}(t, \bx_{t}, \emptyset)$ in \eqref{eq:noise prediction} for small $t$. To see this, we vary the $w$ in \eqref{eq:noise prediction} to control the injected information from the text prompt during the denoising process. Concretely, for $a$ as the starting step of removing text prompt, i.e., during $t\in [0, a)$, we use $w=7.5$, and $w=0$ for $t\in [a, 50]$, where $a\in[0, 50]$. Then, the text prompt only works for $t \in [0, a)$. We generate target images $\bx_{0}^{50}$ under \texttt{PromptSet} with standard denoising process ($a=50$), and compare them with the ones $\bx_{0}^{a}$ generated under varied $a\in[0, 50]$ (Figure \ref{fig:inject text prompt test}). The image-image alignments are measured by standard metrics CLIPScore and $L_1$-distance \citep{gonzales1987digital}. To eliminate magnitude, we report relative results, i.e., ``current minus worst'' over ``best minus worst''. 
	% i.e., $m(\bx_{0}^{a}, \bx_{0}^{50}) / m(\bx_{0}^{0}, \bx_{0}^{50})$, where $m(\cdot, \cdot)$ is one of the two metrics, and the maximal distance is $m(\bx_{0}^{0}, \bx_{0}^{50})$. 
	\par
	The results are in Figure \ref{fig:text_prompt_influence}. During generation, the text information is absence for $t\in [a, 50]$, while Figure \ref{fig:text_prompt_influence} indicates that alignments between $\bx_{0}^{a}$ and target $\bx_{0}^{50}$ will quickly be small only for large $a$ (from 30 to 50). This shows that only if removing the textual information under large $t$, its influence to generated image is removed. Therefore, we can conclude: \textbf{\emph{The information of text prompt is conveyed during the early stage of denoising process.}} Therefore, the overall shape of generated image is mainly decided by the text prompt, while the its details are then reconstructed by itself.   
	
	%    In this way, the text prompt guidance is unexist in $(a, 50]$ steps. We generate images conditioned on \texttt{PromptSet}, under the above $w$ schedule. We compare the image-image alignment of these images with the ones generated under $w=7.5$ for all 50 steps, by .
	
	%    The results are summarized in Figure \ref{fig:text_prompt_influence}. As can be seen, the alignment metrics are rapidly converge to their extreme values, then become flat around a constant after a large $a$. Moreover, the generated images after $a \geq 10$ in Figure \ref{fig:text_prompt_influence} are also visually similar to the images generated under $w=7.5$ for the whole denoising process. These observations are all positively answer the aforementioned question. That says: \emph{The information of text prompt are already conveyed during the early stage of denoising process, and the following reconstruction relies on images themselves.} 
	
	\paragraph{Discussion.} Next, let us explain the phenomenon. Technically, in \eqref{eq:noise prediction}, the $\beps_{\btheta}(t, \bx_t, \cC, \emptyset)$ was proposed to approximate  $\nabla_{\bx}\log{p_{t}(\bx_{t}\mid \cC)} / \sqrt{1 - \baralpha_{t}}$ with decomposition ($p_{t}$ is the density of $\bx_{t}$) 
	%    can be rewritten as 
	%    \begin{equation}\label{eq:noise prediction decomp}
		%        \small
		%            \beps_{\btheta}(t, \bx_t, \cC, \emptyset) = \beps_{\btheta}(t, \bx_t, \emptyset) + w\left(\beps_{\btheta}(t, \bx_t, \cC) - \beps_{\btheta}(t, \bx_t, \emptyset)\right).
		%    \end{equation}
	\begin{equation}\label{eq:conditional score}
		\small
		\nabla_{\bx}\log{p_{t}(\bx_{t}\mid \cC)} = \nabla_{\bx}\log{p_{t}(\bx_{t})} + \nabla_{\bx}\log{p_{t}(\cC\mid \bx_{t})}.
	\end{equation}
	Comparing \eqref{eq:noise prediction} and \eqref{eq:conditional score}, it holds $\beps_{\btheta}(t, \bx_{t}, \emptyset)\propto \nabla_{\bx}\log{p_{t}(\bx_{t})}$ \footnote{This can be verified by the training strategy of it in \citep{ramesh2022hierarchical}, where $\beps_{\btheta}(t, \bx_{t}, \emptyset)$ is used to predict noise in noisy data without condition injected.} and $w(\beps_{\btheta}(t, \bx_{t}, \cC)x - \beps_{\btheta}(t, \bx_{t}, \emptyset)) \propto \nabla\log{p_{t}(\cC\mid \bx_{t})}$. From \citep{song2020score}, the denoising process \eqref{eq:forward propogation} aims to maximize log-likelihood $\log{p_{0}(\bx_{0}\mid \cC)}$. Then, moving along the direction $\nabla_{\bx}\log{p_{t}(\cC\mid \bx_{t})}$ (leads to large $\log{p_{t}(\cC\mid \bx_{t})}$) push $\bx_{t}$ to be aligned with the text prompt $\cC$ during the decreasing of $t$. Adding such a moving direction is standard in conditional generation \citep{marek2021oodgan,song2020denoising,farid2023latent,madaan2021generate}. As shown in Figure \ref{fig:noise_norm}, during the denoising process, $\bx_{t}$ will gradually to be consistent with $\cC$, so that $\nabla\log{p_{t}(\cC\mid \bx_{t})}$ will decrease with $t$. Thereafter, we observe the impact of text prompt conveyed by this term decreases with $t\to 0$. Notably, owing to the quickly reconstructed overall shape of image in Section \ref{sec:First Overall Shape then Details}, the generated $\bx_{t}$ will quickly be consistent with $\cC$, so that explain the quickly decreasing $\nabla\log{p_{t}(\cC\mid \bx_{t})}$. 
	
	\begin{remark}
		In this section, we verify the injected textual information are all conveyed in the first stage of diffusion process. In fact, this phenomenon is also generalized to the other types of information, e.g., conditional image information in subject-driven generation \citep{chen2024anydoor,li2024Photomaker}, we verify this in Appendix \ref{app:subject}
	\end{remark}
	%    
	%    To explain this phenomenon, since we have observed that the overall shape of generated image is quickly reconstructed, the 
	\begin{figure*}[t!]
		\centering
		%    	\vspace{-0.5in}
		\includegraphics[scale=0.5]{./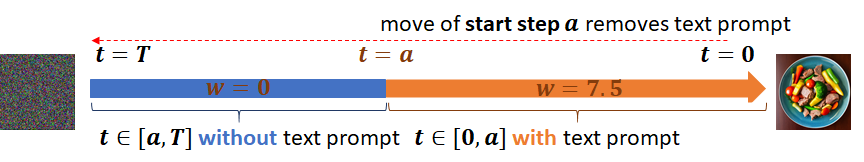}
		%    	 \vspace{-0.2in}
		\caption{Desnoising with text prompt injected in $[0, a]$.}
		\label{fig:inject text prompt test}
		\vspace{-0.1in}
	\end{figure*}
	
	\begin{figure*}[t!]
		%		\centering
		%    		\vspace{-0.5in}
		\subfloat[Relative Image-Alignment \label{fig:text_prompt_influence}]{\includegraphics[scale=0.33]{./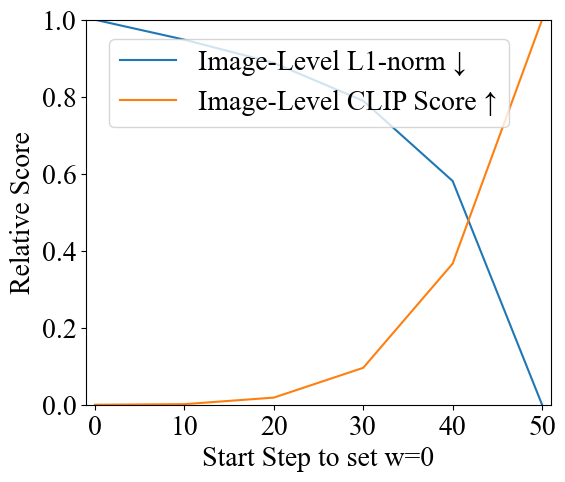}}
		% \vspace{-0.2in}
		\subfloat[Norm/Dim of (Un)/Conditional Model\label{fig:noise_norm}]{\includegraphics[scale=0.33]{./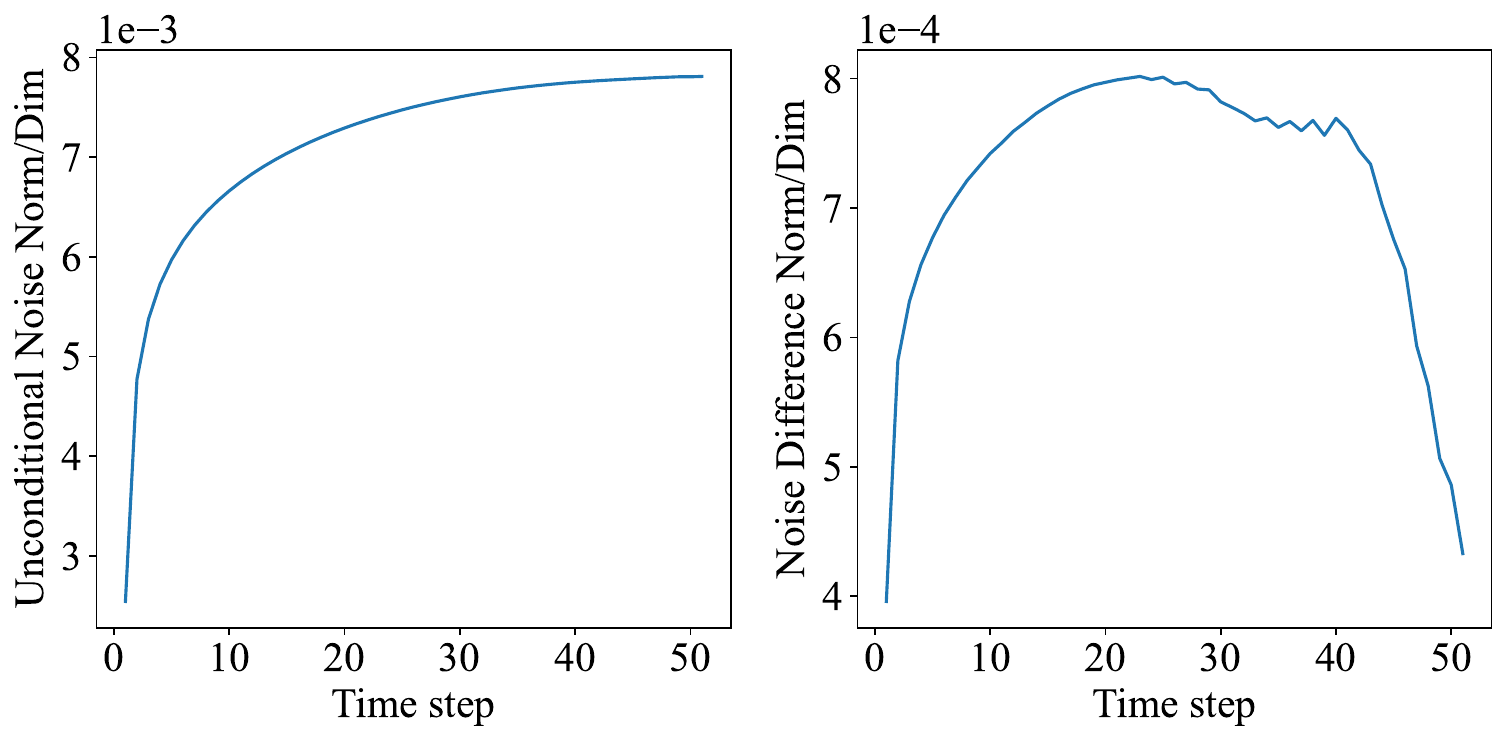}}
		\caption{Figure \ref{fig:text_prompt_influence} is the relative difference ``current minus worst'' over ``best minus worst'' under different start step $a$ of Denoising process Figure \ref{fig:inject text prompt test}. The last two figures \ref{fig:noise_norm} are per-dimensional norm of unconditional noise $\beps_{\btheta}(t, \bx_t, \emptyset)$ and noise difference $ w\left(\beps_{\btheta}(t, \bx_t, \cC) - \beps_{\btheta}(t, \bx_t, \emptyset)\right)$}
		\vspace{-0.1in}
	\end{figure*}
	
	\section{Application}\label{sec:application}
	%    In this section, we give an application of the conclusion that the text prompt mainly works on the first stage of the denoising process. 
	\paragraph{Acceleration of Sampling.} Since the information contained in text prompt is mainly conveyed by the noise prediction with condition $\beps_{\btheta}(t, \bx_{t}, \cC)$, we can consider removing the evaluation of after the first few steps of denoising process. This is because the information in text prompt has been conveyed in this stage, and the computational cost can be significantly reduced without evaluating $\beps_{\btheta}(t, \bx_{t}, \cC)$. 
	\par
	Therefore, we substitute the noise prediction $\beps_{\btheta}(t, \bx_{t}, \cC,  \emptyset)$ as 
	\begin{equation}\label{eq:new noise prediction}
		\small
		\beps_{\btheta}(t, \bx_{t}, \cC, \emptyset) = 
		\begin{dcases}
			\beps_{\btheta}(t, \bx_t, \emptyset) + w\left(\beps_{\btheta}(t, \bx_t, \cC) - \beps_{\btheta}(t, \bx_t, \emptyset)\right) \qquad & a\leq t; \\
			\beps_{\btheta}(t, \bx_t, \emptyset) \qquad & 0 \leq t < a. 
		\end{dcases}
	\end{equation} 
	By varying $a\to T$ in \eqref{eq:new noise prediction}, the inference cost is reduced as an evaluation of $\beps_{\btheta}(t, \bx_t, \cC)$ is saved.
	\par
	To evaluate the saved computational cost of using noise prediction \eqref{eq:new noise prediction} during inference and the quality of generated data, we consider applying it on two standard samplers DDIM \citep{song2020denoising} and DPM-Solver \citep{lu2022dpm} on a benchmark dataset MS-COCO \citep{lin2014microsoft} in T2I generation. We consider backbone models Stable-Diffusion (SD) v1.5-Base, SD v2.1-Base \citep{ramesh2022hierarchical}, and Pixart-Alpha \citep{chen2024pixart}. Concretely, we apply noise prediction \eqref{eq:new noise prediction} with varied $a$ to generate 30K images from 30K text prompts in the test set of MS-COCO, for each sampler and backbone model. We compare the difference (measured by $L_{1}$-distance and Image-Level CLIPScore) between the generated images under $a > 0$ and $a = 0$ (the standard noise prediction). The results are in Table \ref{tbl:acceleration}, where we also report the Frechet Inception Distance (FID) score \citep{heusel2017gans} under each $a$ to evaluate the quality of generated images.  
	
	\begin{figure*}[t!]
		\centering
		%		\vspace{-0.5in}
		\includegraphics[scale=0.5]{./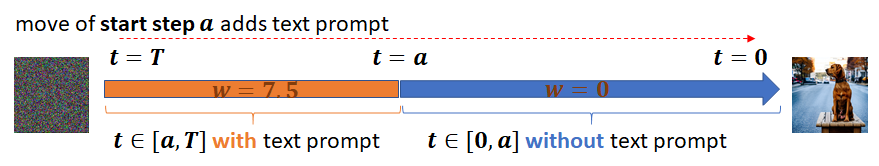}
		% \vspace{-0.2in}
		\caption{Desnoising under $\beps_{\btheta}$ \eqref{eq:new noise prediction}. The text prompt is injected in $[a, T]$, instead of $[0, a]$ in Figure \ref{fig:inject text prompt test}.}
		\label{fig:inject text prompt}
		%		\vspace{-0.2in}
	\end{figure*}
	
	\begin{figure*}
		\centering
		\includegraphics[scale=0.5]{./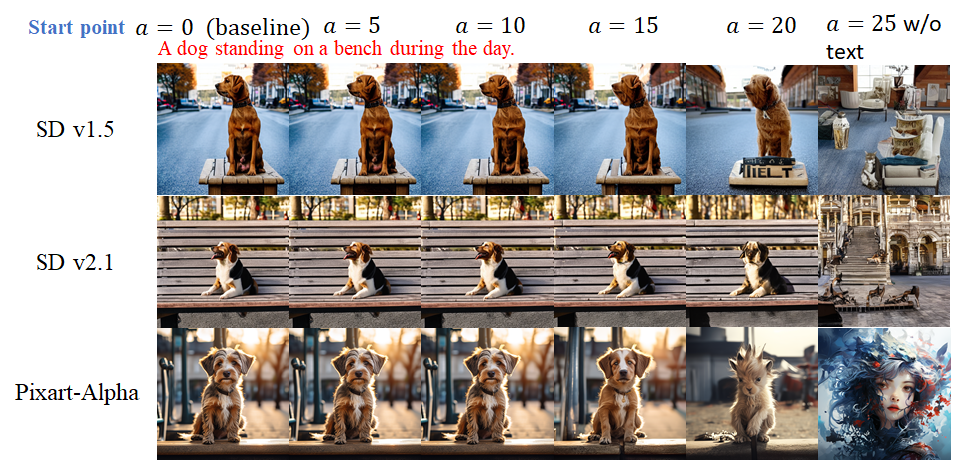}
		\caption{The generated images with 25 steps DPM-Solver under $\beps_{\btheta}$ in \eqref{eq:new noise prediction} (Figure \ref{fig:inject text prompt}). The textual information is removed during $t\in[0, a]$. With $a\to 25$, the inference cost is decreased.} 
		\label{fig:substitution}
		%		\vspace{-0.2in}
	\end{figure*}
	\par
	The Table \ref{tbl:acceleration} indicates that proper $a$ in \eqref{eq:new noise prediction} significantly reduces the computation cost during the inference stage without deteriorate the quality of generated images. For example, SD v1.5 with $a = 20$ saves 27+\% computational cost, but generates images close to the baseline method ($a = 0$).

	\begin{table*}[t!]
		\caption{The difference between images generated under varied $a$ with the ones of $a=0$. The results are averaged over 30K generated images, and saved latency is evaluated on one V100 GPU.}
		%		\vspace{-0.1in}
		\label{tbl:acceleration}
		%	 \vspace{0.1in}
		\centering
		\scalebox{0.62}{
			{
				\begin{tabular}{l|cccccc|cccccc}
					\hline
					Start point $a$   & 0 (baseline)  & 10 & 20 &  30  & 40  & 50 & 0 (baseline) &  5  &  10 & 15 & 20 &25\\
					\hline 
					Sampler / Backbone        & \multicolumn{6}{c}{DDIM / SD v1.5}    & \multicolumn{6}{c}{DPM-Solver / SD v1.5} \\
					\hline
					Image-CLIPScore $\uparrow$ &1.000     &0.998   &0.996   &0.971  &0.838 & 0.539 &1.000 &0.999 &0.994 &0.956 &0.798 & 0.533\\
					$L_{1}$-distance $\downarrow$  &0.000  &0.011   &0.022  &0.043 &0.087 & 0.195 &0.000 &0.015 &0.027 &0.050 &0.100  & 0.188\\
					Saved Latency $\uparrow$  &0.00\%      &7.84\%   &18.10\%   &27.18\%  &35.24\% & 48.47\% & 0.00\% &8.36\% &17.95\% &26.11\%  &34.86\%  & 47.60\% \\
					FID $\downarrow$  &13.772    &13.770   &13.805   &14.012  &15.048 & 19.296 &14.297 &14.286 &14.725 &14.985 &15.860  & 19.758\\
					Text-CLIPScore $\uparrow$ &31.040    &31.001   &30.894   &30.493  &28.172 & 16.682 &30.992 &30.891 &30.772 &30.205 &26.843 & 16.721\\
					\hline
					Sampler / Backbone        & \multicolumn{6}{c}{DDIM / SD v2.1}    & \multicolumn{6}{c}{DPM-Solver / SD v2.1} \\
					\hline
					Image-CLIPScore $\uparrow$ &  1.000   &0.999   &0.998   &0.986  &0.902 & 0.550 &1.000 &0.999 &0.998 &0.988 &0.901 & 0.543 \\
					$L_{1}$-distance $\downarrow$  &0.000  &0.017   &0.041  &0.077 &0.152 &  0.386 &0.000 &0.026 &0.046 &0.083   &0.160 & 0.369\\
					Saved Latency $\uparrow$  &  0.00\%    &8.68\%   &18.95\%   &28.16\%  &36.19\% & 47.99\% &0.00\% &8.75\% &18.28\% &26.24\% &35.48\% & 47.75\% \\
					FID $\downarrow$  &13.014    &13.011   &13.046   &13.247  &14.242 & 18.472 &13.507 &13.500 &13.914 &14.159 &15.015 & 18.983 \\
					Text-CLIPScore $\uparrow$ &31.413    &31.405   &31.362   &31.113  &29.717 & 16.706 &31.339 &31.326 &31.292 &31.045 &29.653 & 16.639 \\
					\hline
					Sampler / Backbone        & \multicolumn{6}{c}{DDIM / Pixart-Alpha}    & \multicolumn{6}{c}{DPM-Solver / Pixart-Alpha} \\
					\hline
					Image-CLIPScore $\uparrow$ &  1.000   &0.999   &0.935   &0.744  &0.643 & 0.522 &1.000 &0.999 &0.993 &0.911 &0.648 & 0.625\\
					$L_{1}$-distance $\downarrow$  &0.000  &0.024   &0.058  &0.103 &0.169 & 0.247 &0.000 &0.013 &0.022 &0.046   &0.098 & 0.199 \\
					Saved Latency $\uparrow$  &  0.00\%    &8.24\%   &17.98\%   &27.20\%  &34.95\% & 49.15\% &0.00\% &7.92\% &15.77\% &25.18\% &33.80\% & 48.10\% \\
					FID $\downarrow$ &22.651 &22.884 &23.258 &25.485 &29.760 & 36.525 &18.669    &18.520   &18.798   &19.358  &20.494 &   26.159 \\
					Text-CLIPScore $\uparrow$ &28.157    &27.979   &25.719   &19.640  &14.986 & 14.586 &30.733 &30.721 &30.745 &29.416 &20.629 &14.928 \\
					\hline
				\end{tabular}
				%				\vspace{-0.2in}
		}}
	\end{table*}
	
	\section{Conclusion}
	In this paper, we investigate the working mechanism of T2I diffusion model. By empirical and theoretical (frequency) analysis, we conclude that the denoising process firstly constructs the overall shape then details of the generated image. Next, we explore the working mechanism of text prompts. We find its special token \texttt{[EOS]} has a significant impact on the overall shape in the first stage of the denoising process, in which the information in the text prompt is conveyed. Then, the details of images are mainly reconstructed by themselves in the latter stage of generation. Finally, we apply our conclusion to accelerate the inference of T2I generation, and save 25\%+ computational cost.

	\section*{Acknowledgement}
	We gratefully acknowledge the support of Mindspore, CANN(Compute Architecture for Neural Networks) and Ascend AI Processor used for this research. 
	
	%	\newpage
	%        \section*{Broader Impact} 
	%        This paper delves into the functioning mechanism of text prompts in Text-to-Image generation, which has positive applications for society. Specifically, it can foster the advancement of precise and controllable generation, contributing significantly to various applications in real-world (\textit{e.g.}, product image generation, game character and scene generation, advertising image generation, etc). 
	
	\bibliography{reference}
	\bibliographystyle{abbrv}
	
	\newpage
	\appendix
	\onecolumn
\section{Proofs of Proposition \ref{pro:frequency}}\label{app:proofs of proposition}
\frequency*
\begin{proof}
		Note that $\beps_{t}^{kl}$ (abbreviated as $\beps^{kl}$) are i.i.d. Gaussian random variable for each of $k,l$. Thus we have  
		\begin{equation}
			\small
			\begin{aligned}
				F_{\beps}(u, v)  = \frac{1}{MN}\sum\limits_{k = 0}^{M - 1}\sum\limits_{l = 0}^{N - 1}\beps^{kl}\exp\left(-2\pi \ri\left(\frac{ku}{M} + \frac{lv}{N}\right)\right) = \frac{1}{MN}\sum\limits_{k = 0}^{M - 1}\sum\limits_{l = 0}^{N - 1}\beps^{kl}\exp\left(-\ri\theta^{kl}_{uv}\right),
			\end{aligned}
		\end{equation}
		with $\theta^{kl}_{uv}$ is the $(k, l)$-th angle in complex value space, and we may simplify it as $\theta^{kl}$ for ease of notations. 
		\par
		Next, we will show the proposition is a direct consequence of the concentration inequality of Gaussian distribution. We prove our results under one-dimensional Fourier transformation under dimension $M$, where the proof can be easily generalized to a two-dimensional case. Owing to the definition of the norm of complex value, for any specific $u$,  
		\begin{equation}\label{}
			\small
			\begin{aligned}
				\left\|F_{\beps}(u)\right\|^{2} = F_{\beps}(u)\overline{F_{\beps}(u)} = \frac{1}{M^{2}}\beps^{\top}\Lambda\textbf{1}\textbf{1}^{\top}\bar{\Lambda}\bar{\beps}, 
			\end{aligned}
		\end{equation}
		where $\Lambda = \diag(e^{-i\theta^{0}}, \cdots, e^{-i\theta^{M - 1}})$. Then let $\bP = (\sqrt{1/M}\textbf{1}^{\top}, \cdots, )^{\top}\bar{\Lambda}$, where $(\sqrt{1/M}\textbf{1}^{\top}, \cdots, )^{\top}$ is constructed by vector $\sqrt{1/M}\textbf{1}$ and its orthogonal complement. We can verify that $\bP$ is an orthogonal matrix. Then, let $\beps = \bP^{\top}\by$, so that $\by$ has the same distribution with $\beps$. Thus  
		\begin{equation}
			\small
			\frac{1}{M^{2}}\beps^{\top}\Lambda\textbf{1}\textbf{1}^{\top}\bar{\Lambda}\beps = \frac{1}{M^{2}}\by^{\top}\bP\Lambda\textbf{1}\textbf{1}^{\top}\bar{\Lambda}\bar{\bP}^{\top}\bar{\by} = \frac{1}{M}\be_{1}^{\top}\by\bar{\by}^{\top}\be_{1} = \frac{1}{M}(\by^{1})^{2},
		\end{equation}
		where $\by^{1}$ is a standard Gaussian. Thus, by the Berstein's inequality to sub-exponential random variable i.e., $\chi_{1}^{2}$, we have 
		\begin{equation}
			\small
			\bbP\left(\left|F_{\beps}(u) - \mE\left[F_{\beps}(u)\right]\right| \geq \delta\right) = \bbP\left(\left|(\by^{1})^{2} - \mE\left[\left(\by^{1}\right)^{2}\right]\right| \geq \delta\right) \leq 2\exp\left(-\frac{1}{8}\min\left\{\delta^{2}, \delta\right\}\right). 
		\end{equation}
		Since $\delta\in(0, 1)$. Thus with probability at least $1 - \delta / M$, we have 
		\begin{equation}
			\small
			\frac{1}{M} - \frac{1}{M}\sqrt{8\log{\frac{2M}{\delta}}}\leq \left\|F_{\beps}(u)\right\|^{2} \leq \frac{1}{M} + \frac{1}{M}\sqrt{8\log{\frac{2M}{\delta}}},
		\end{equation}
		which proves that the target $\|F_{\beps_{t}}(u, v)\|^{2}$ is of order $\cO(\frac{1}{M})$, so that verify our conclusion.   
\end{proof}

\section{Text-Image Alignment Metrics}\label{app:Text-Image Alignment Metrics}
In this paper, we mainly use three metrics as in \citep{huang2023t2i} to measure the alignment between text prompt condition and generated image. Next, we give a brief introduction to the these metrics. 

\paragraph{CLIPScore (Text).} After extracting the features of generated images and text prompt respectively by CLIP encoder \citep{radford2021learning}, CLIPScore (Text) is the cosine similarity between the two features. Similarly, the CLIPScore (Image) is the cosine-similarity between two image features. 

\paragraph{BLIP-VQA.} To improve the limited details capturing capability of the CLIP encoder, \citep{huang2023t2i} propose BLIP-VQA which leverages the visual question answering (VQA) ability of BLIP model \citep{li2022blip}. They compare the generated with the target text prompt separately described by several questions. For example, the prompt  ``A blue bird'' can be separated into questions ``a bird ?'', ``a blue bird ?'' etc. Then BLIP-VQA outputs the probability of ``Yes'' when comparing generated images and these questions.   

\paragraph{MiniGPT4-CoT.} The MiniGPT4-CoT \citep{huang2023t2i} combines a strong multi-modality question answering model MiniGPT-4 \citep{zhu2023minigpt} and Chain-of-Thought \citep{wei2022chain}. The metric is computed by feeding the generated images to MiniGPT-4, then sequentially asking the model two questions ``describe the image'' and ``predict the image-text alignment score''. By constructing such CoT, the multi-modal model will not ignore the details in generated images.  

\section{Ablation Study on Number of [\texttt{EOS}]}\label{app:Ablation Study on Nubmer of eos}

\begin{figure}[htbp]
		\centering
    	\includegraphics[scale=0.28]{./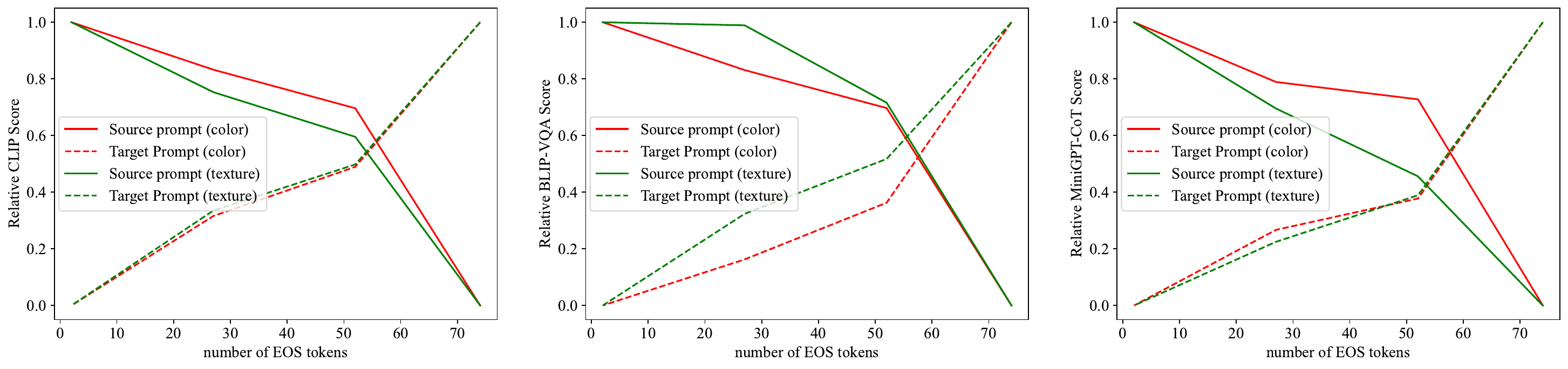}
		\caption{Relative BLIP-VQA Combine color and texture add complex, CLIP score, MiniGPT-CoT with source (or target) prompt under different number of [\texttt{EOS}]. Here the y-axis is the current BLIP-VQA, CLIP, MiniGPT-CoT over the maximum ones, which is used to alleviate the bias brought by the metric itself.}
%		 \vspace{-0.1in}
		\label{fig:num_eos}
	\end{figure}
In this section, we conduct an ablation on the number of [\texttt{EOS}]. Concretely, for each text prompt in \texttt{S-PromptSet}, we repeat the semantic tokens in the text prompt e.g., ``[\texttt{SOS}] + a yellow cat a yellow ... + [\texttt{EOS}]'', so that the number of [\texttt{EOS}] is reduced in these constructed text prompts. Then, we generate images under these reconstructed text prompts and compare the text-image alignments between generated images and source or target prompts. 
\par
The results are in Figure \ref{fig:num_eos}. As can be seen, with the increasing of semantic tokens (so that decreasing of [\texttt{EOS}]), the generated images tend to be consistent with the source prompt, instead of the target prompt. Therefore, we speculate that the domination of [\texttt{EOS}] may be partially originated from its larger number, compared with semantic tokens. On the other hand, we observe that [\texttt{EOS}] in the forward positions have larger impacts compared to the latter ones, as the alignments between generated images with target prompts significantly decreased along the x-axis from right to left, in Figure \ref{fig:num_eos}. This trends further indicate that the [\texttt{EOS}] may contain more information compared with the latter ones, which indicates the domination of [\texttt{EOS}] originates from their larger number, but also the more information in the first few [\texttt{EOS}].   

\section{Conveyed Information in Semantic Tokens}\label{sec:Conveyed Information in Semantic Tokens}
%\begin{table}[t]
%	%		\vspace{-0.1in}
%	\caption{The appearance of objects and attributes from source prompt in generated images under text prompts from \texttt{S-PromptSet}.}
%	%		\vspace{-0.1in}
%	\label{tbl:eos object or attribute}
%	% \vspace{0.1in}
%	\centering
%	\scalebox{0.8}{
%		{
%			\begin{tabular}{l|cc|cc}
%				\hline
%				\multirow{2}{*}{\diagbox[height=2\line]{Alignment}{Semantic}{Attribute}} & \multicolumn{2}{c|}{Color}  & \multicolumn{2}{c}{Texture} \\
%				&  Object &  Attribute          & Object & Attribute             \\
%				\hline
%				%					CLIPScore (Text)$\uparrow$ &  0.3231  &  \textbf{0.6769}
%				%					&  0.2316  &  \textbf{0.7684} \\
%				BLIP-VQA$\uparrow$  &     0.4665      &      \textbf{0.7870}    &    0.4411     &  \textbf{0.5796}   \\
%				%					MiniGPT-CoT$\uparrow$  &           &          &         &     \\
%				\hline
%			\end{tabular}
%			%				\vspace{-0.3in}
%	}}
%\end{table}

During our discussion in Section \ref{sec:The Working Mechanism of Text Prompt}, we conclude that the \texttt{[EOS]} has a larger impact than the ones of semantic tokens during T2I generation. However, as observed in Figure \ref{fig:eos substitbution generated image}, under text prompt with switched [\texttt{EOS}], some information in semantic tokens are still conveyed in the generated images, e.g., blue color in the last image of the first row in Figure \ref{fig:eos substitbution generated image}. Therefore, we explore how this information is conveyed in this section, which also reveals the working mechanism of text prompt.  
\par
Firstly, in Figure \ref{fig:cross-attention map semantic tokens}, we visualize the cross-attention map of each tokens under text prompt from \texttt{S-PromptSet}, similar to Figure \ref{fig:cross-attention map ratio}. Surprisingly, we find that in cross-attention map of semantic tokens and [\texttt{EOS}] are all visually similar to the shape of the final generated images. The similarity is reasonable for [\texttt{EOS}] as it contains the overall information, so that it is perceptible and transfer their information according to constructed cross-attention map. 
\par
On the other hand, when semantic tokens convey their information according to the similar cross-attention, for attributes (color or texture), unlike object information, they are potentially not contradict to overall shape decided by [\texttt{EOS}]. Thus, the information of attributes is more likely to be conveyed in its corresponding pixels. However, this does hold for object/noun tokens whose information is very likely related to shape, which has already been decided by [\texttt{EOS}]. 
\par
This discussion explains the phenomenon of information in semantic tokens are appeared in the generated images under prompt from \texttt{S-PromptSet}. Combining the observations to the working stage of text prompt in Section \ref{sec:The Working Mechanism of Text Prompt}, we can conclude that the semantic tokens also work in the T2I generation, though it has less impact compared to \texttt{[EOS]}. 
%\par

\begin{figure}[t!]
	\centering
	\includegraphics[scale=0.5]{./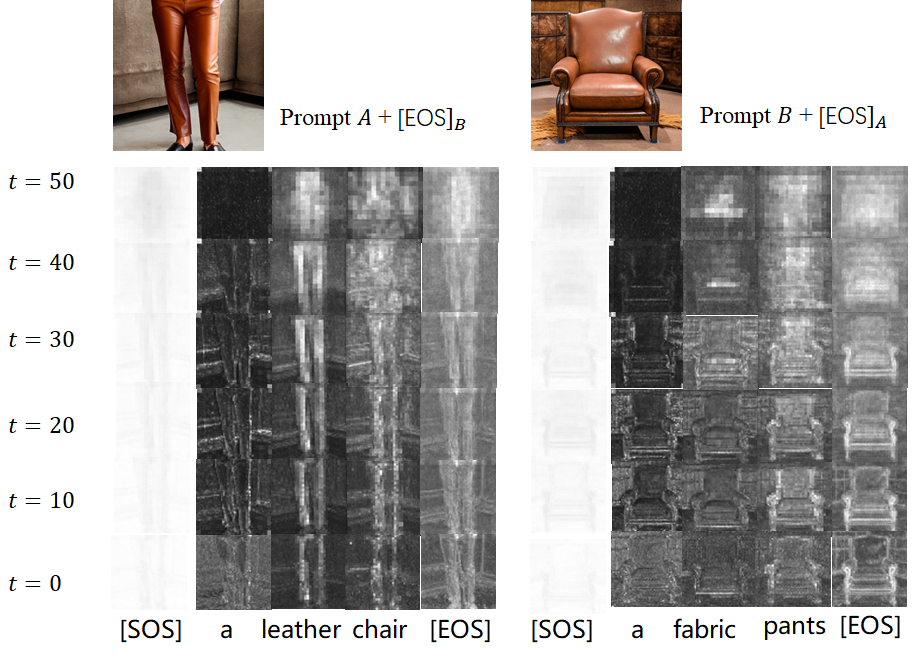}
	\caption{The visualization of cross-attention map under text prompt with switched [\texttt{EOS}] from \texttt{S-PromptSet}. The pixels corresponding to semantic tokens are in the shape of the final generated data as in text prompt $B$ provide [\texttt{EOS}]. For example, the token ``chair'' corresponds to pixels in the shape of paint, so its information can not be conveyed, while this phenomenon does exist in the attribute token ``leather''.}
	% \vspace{-0.2in}
	\label{fig:cross-attention map semantic tokens}
\end{figure}

% To quantitatively verify our discussion, we generate images under \texttt{S-PromptSet} and measure the strength of conveyed semantic information. Concretely, we compare the appearances of attribute and object from source prompt in generated images, where the appearance is measured under BLIP-VQA, as we find CLIPScore and MiniGPT-4 are inaccurate in measuring image's partial information (especially for attributes). The results are in Table \ref{tbl:eos object or attribute}, as we expected, which show the appearance chance of attributes in source prompt are significantly larger the ones of objects.     
\par
    \begin{figure}[t]
    	\centering
    	\includegraphics[scale=0.55]{./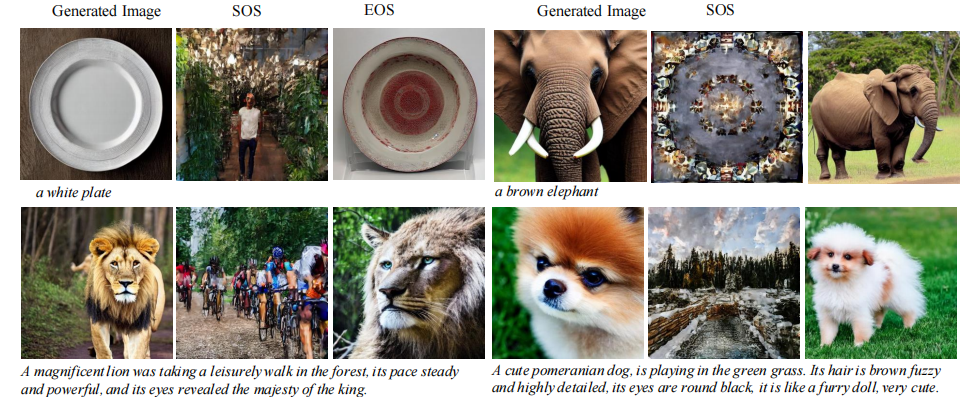}
    	%    	\vspace{-0.2in}
    	\caption{Generated images with prompts only contain information from [SOS] or [EOS].}
    	%    	\vspace{-0.2in}
    	\label{fig:sos substitution}
    \end{figure}
    
    \begin{figure}[t]
    	\centering
    	\includegraphics[scale=0.6]{./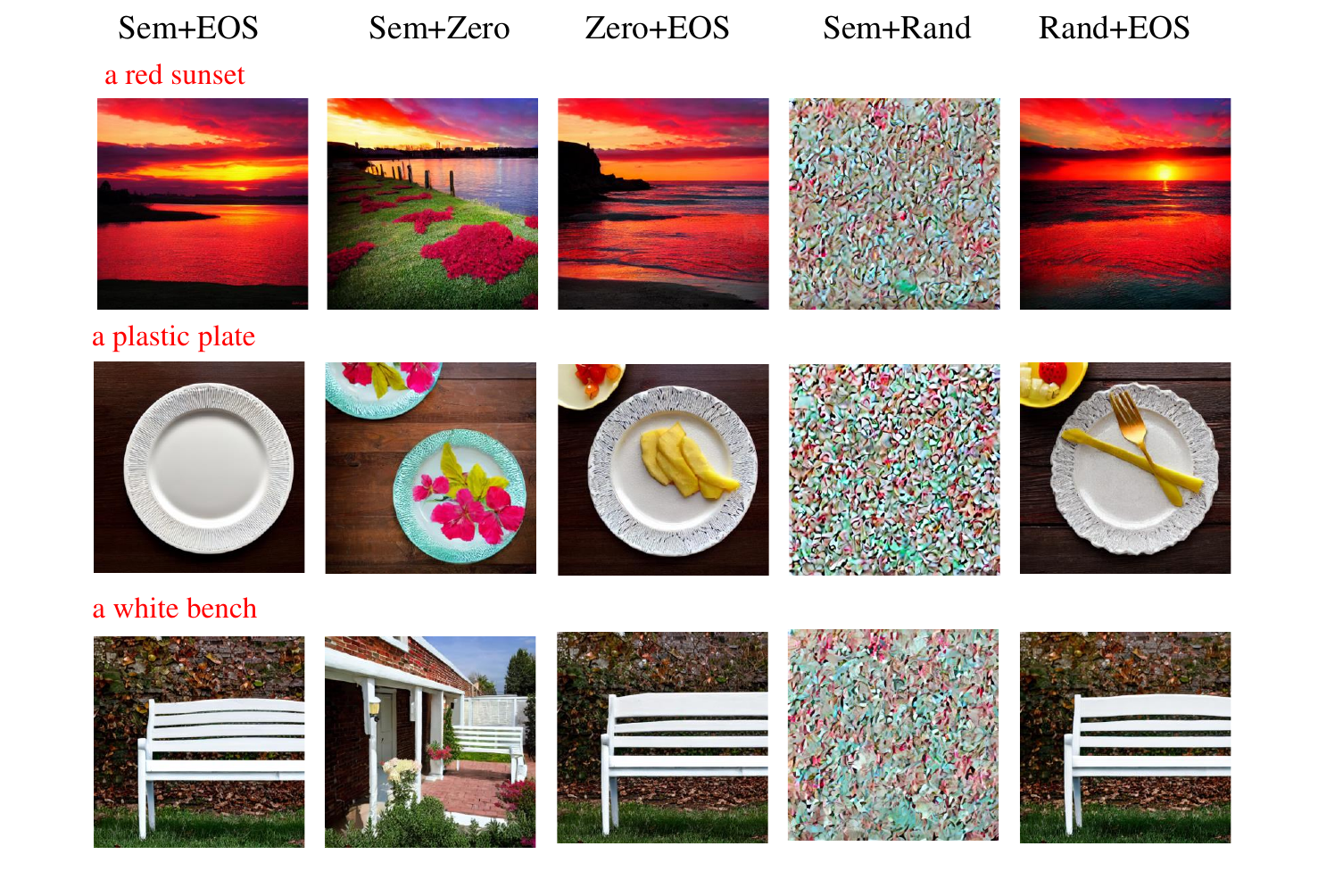}
%    	\vspace{-0.2in}
    	\caption{Generated images with zero or random vectors substitution.}
%    	\vspace{-0.2in}
    	\label{fig:zero_rand_substitution}
    \end{figure}
    
\section{[SOS] Contains no Textual Information}
As mentioned in Section \ref{sec:eos contains more information}, the special token [SOS] is supposed to contain no textual information, due to the auto-regressive textual prompt encoder. To further verify this, we conduct the following two types of prompts. 1): all 77 tokens are [SOS] from the given text prompt, 2) except the first [SOS] token, all other 76 
tokens are [EOS] from the given text prompt. Then, we generate images under these prompts with SD v1.5. The generated images as in Figure 
\ref{fig:sos substitution}. As can be seen, for the first type of prompt, no textual information is conveyed, while the phenomenon disappears in the second type of prompt. This observation verifies our conclusion that [SOS] contains no textual information.

\section{More Evidences on [\texttt{EOS}] Contains More Information}
	\begin{table}[t]
	%		\vspace{-0.1in}
	\caption{The alignment of generated results image under different constructed text prompt sets. Here ``Sem  + EOS'' is the original text prompt, and serves as baseline here. Besides that, the CLIPScore (Image) is the image-level alignment of generated images with the ones under ``Sem  + EOS''.}
	%		\vspace{-0.1in}
	\label{tbl:substitution}
	% \vspace{0.1in}
	\centering
	\scalebox{1.0}{
		{
			\begin{tabular}{l|cccc}
				\hline
				Text Prompt & CLIPScore (Text)$\uparrow$  & CLIPScore (Image)$\uparrow$ & BLIP-VQA$\uparrow$ & MiniGPT4-CoT$\uparrow$ \\
				\hline 
				Sem + Zero  &  0.2407   &  0.6732 &  0.5392 & 0.6757 \\
				Sem + Rand  &  0.2153   &  0.6038 &  0.3606 & 0.5428 \\
				Zero + EOS  &  0.2999   &  0.8887 &  0.7467 & 0.6982 \\
				Rand + EOS  &  0.3008   &  0.8791 &  0.7669 & 0.7000 \\
				Sem  + EOS  &  0.3110   &  1.0000 &  0.8999 & 0.7412 \\
				\hline
			\end{tabular}
%			\vspace{-0.3in}
	}}
\end{table}
In this section, we further verify that the impact of [\texttt{EOS}] is larger than the ones of semantic tokens in T2I generation. To further verify this conclusion, under the text prompts with the format of ``[\texttt{SOS}] + Sem + [\texttt{EOS}]'' from our dataset \texttt{PromptSet}, we substitute all semantic tokens or [\texttt{EOS}] with zero vectors or random Gaussian noise. As a result, we get the 4 sets of text prompts, i.e., ``[\texttt{SOS}] + Sem + Zero'' (abbrev Sem + Zero), ``[\texttt{SOS}] + Sem + Random'' (Sem + Rand), ``[\texttt{SOS}] + Zero + [\texttt{EOS}]'' (Zero + EOS), and ``[\texttt{SOS}] + Random + [\texttt{EOS}]'' (Rand + EOS). These constructed text prompts ideally contain complete semantic information, and we verify the alignment of the generated images with the corresponding text prompt conditions. The alignments are measured by text-image alignment metrics: CLIPScore \citep{radford2021learning,hessel2021clipscore}, BLIP-VQA \citep{li2022blip,huang2023t2i}, and  MiniGPT4-CoT \citep{zhu2023minigpt,huang2023t2i}. 
\par
The results are summarized in Table \ref{tbl:substitution}. As can be seen, as expected for baseline combination ``Sem + EOS'', the alignments under text prompts with ``EOS'' preserved are significantly better than the ones with ``Sem'' preserved. Thus, the observations further verify our conclusion that  \emph{the [\texttt{EOS}] has larger influences than semantic tokens during the denoising process.} 
\par
Moreover, we find the generation is somehow robust, as involving random noise in text prompts still generates semantic meaningful images. We visualize some generated images under constructed text-prompts from Table \ref{tbl:substitution} in Figure \ref{fig:zero_rand_substitution}, which indicates the images under ``Zero + EOS'' indeed visually have the best quality in alignment, so that consist with Table \ref{tbl:substitution}. Besides that, the other combinations generate semantic meaningful images as well, expected for ``Sem + Rand'' Thus semantic tokens do not contain enough information for generation.
\section{Key or Value Dominates the Influence?}
     \begin{figure}[htbp]
	\centering
	\includegraphics[scale=0.4]{./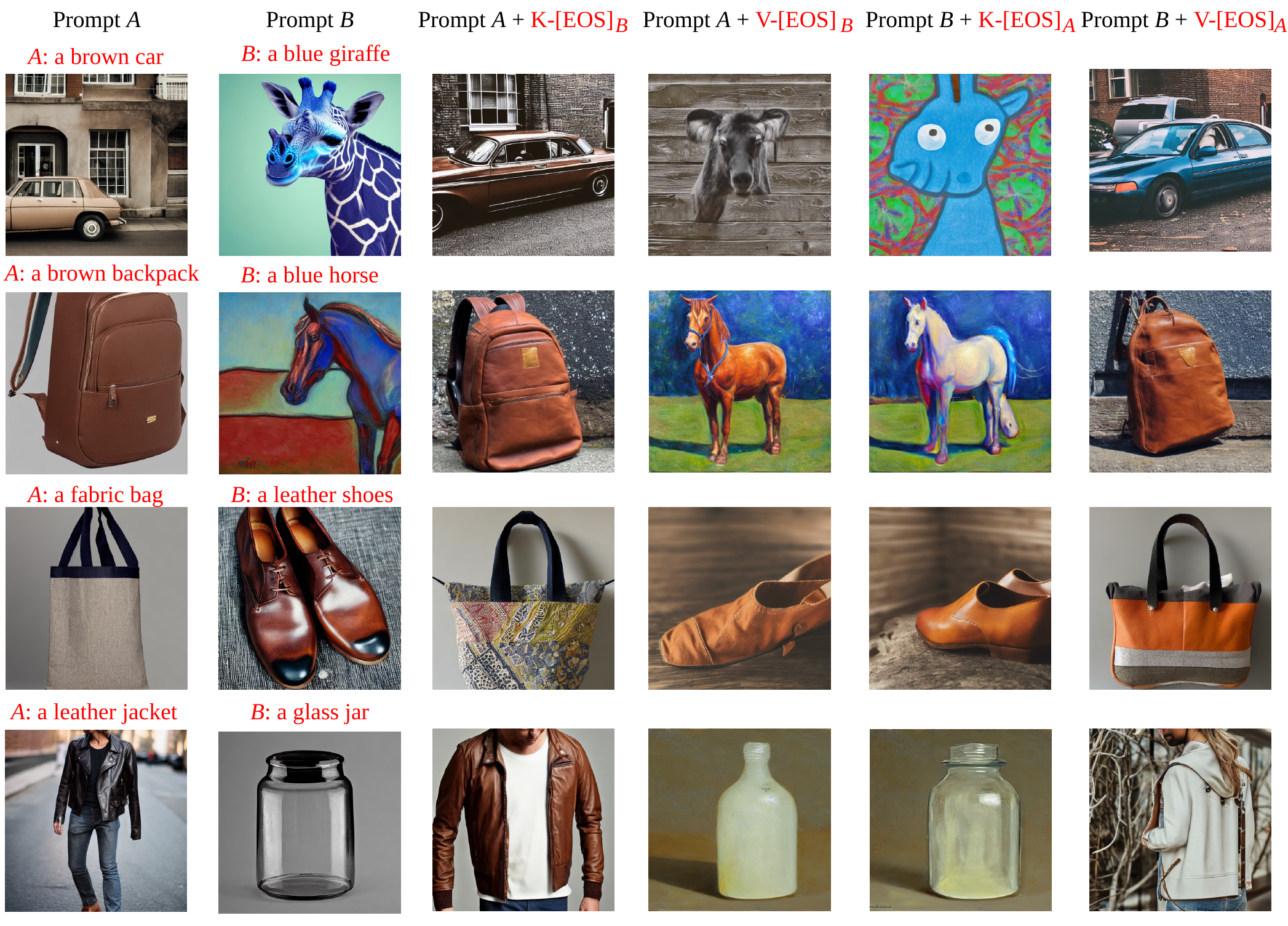}
	\vspace{-0.1in}
	\caption{Generated examples of Key or Value substitution.}
	\label{fig:kv substitution}
\end{figure}

\begin{figure*}[t!]
	\centering
	\includegraphics[scale=0.3]{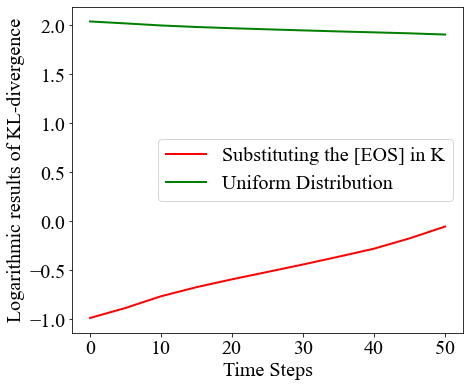}
	%        \vspace{-0.2in}
	\caption{The averaged KL-divergence over pixels and layers.}
	%        \vspace{-0.1in}
	\label{fig:divergence}
\end{figure*}
\begin{table*}[t]
%			\vspace{-0.1in}
	\caption{The alignment of generated image with its source and target prompts, under switched [\texttt{EOS}] on Key or Value substitution. Here $KV$-Sub is the complete substitution as in Section \ref{sec:The Working Mechanism of Text Prompt}, which serves as a baseline here.}
	\label{tbl:eos kv substitution alignment}
	% \vspace{0.1in}
	\centering
	\scalebox{1.0}{
		{
			\begin{tabular}{l|cccccc}
				\hline
				\multirow{2}{*}{\diagbox{Alignment}{Attribute}} & \multicolumn{2}{c}{$K$-Sub}  & \multicolumn{2}{c}{$V$-Sub} & \multicolumn{2}{c}{$KV$-Sub}  \\
				&  Source &  Target          & Source & Target  & Source &  Target                     \\
				\hline
				
				CLIPScore (Text)$\uparrow$ &\textbf{0.3132}    &0.1875  &0.2538    &\textbf{0.2628}  &0.2363  &\textbf{0.2758} 
				  \\
				BLIP-VQA$\uparrow$ &\textbf{0.7127}    &0.2379  &0.3724    &\textbf{0.3984}  &0.3325  &\textbf{0.4441}     \\
				MiniGPT-CoT$\uparrow$  &\textbf{0.8025}    &0.6215  &0.6749    &\textbf{0.7071}  &0.6473  &\textbf{0.7213}           \\
				\hline
			\end{tabular}
%							\vspace{-0.4in}
	}}
\end{table*}

\begin{figure}[t!]
	\centering
	\includegraphics[scale=0.4]{./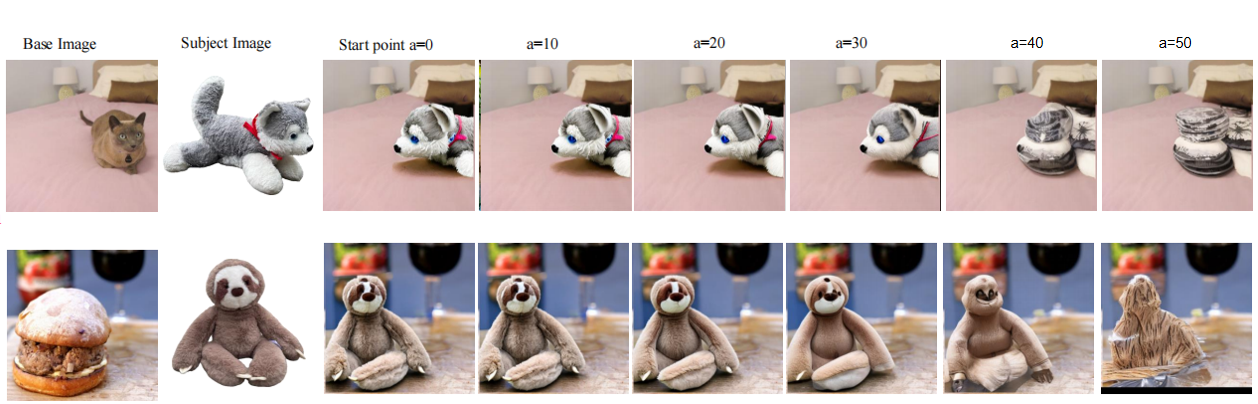}
	%            \vspace{-0.1in}
	\caption{We implement our sampling strategy on subject-driven generation model, AnyDoor \citep{chen2024anydoor}. We remove the condition image from different time steps (denote as $a$) during denoising process. 
		The generated images still preserve the specific details as baseline model (start point $a=0$) when start removing time steps is set to 20.}
	%            \vspace{-0.2in}
	\label{fig:subject driven}
\end{figure}

\begin{figure}[t!]
	\centering
	\includegraphics[scale=0.45]{./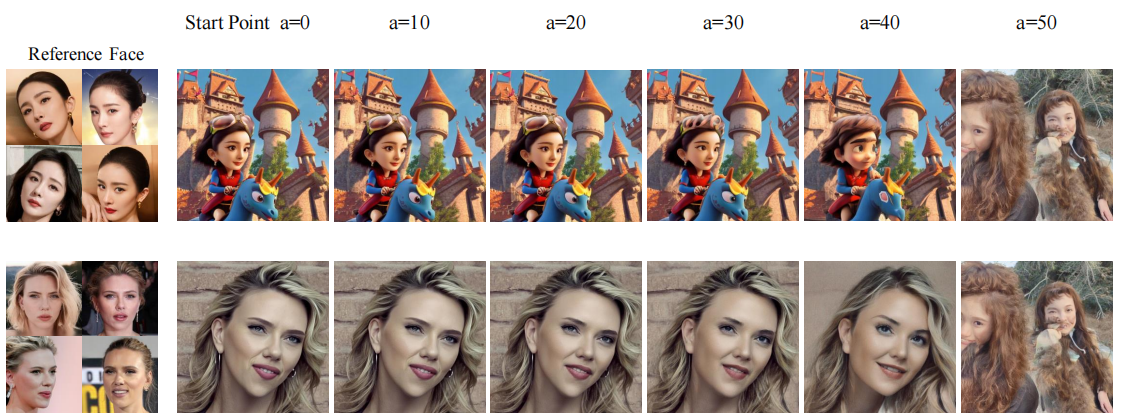}
	%            \vspace{-0.1in}
	\caption{We implement our sampling strategy on human face generation model, PhotoMaker. We remove the condition (text prompts and reference face) from different time 
		steps (denote as $a$) during denoising process. The generated images and faces still preserve the specific details as baseline model (start point $a=0$) when start removing time steps is set to 20.}
	%            \vspace{-0.2in}
	\label{fig:face generation}
\end{figure}

	\begin{figure}[t!]
		\centering
		\includegraphics[scale=0.6]{./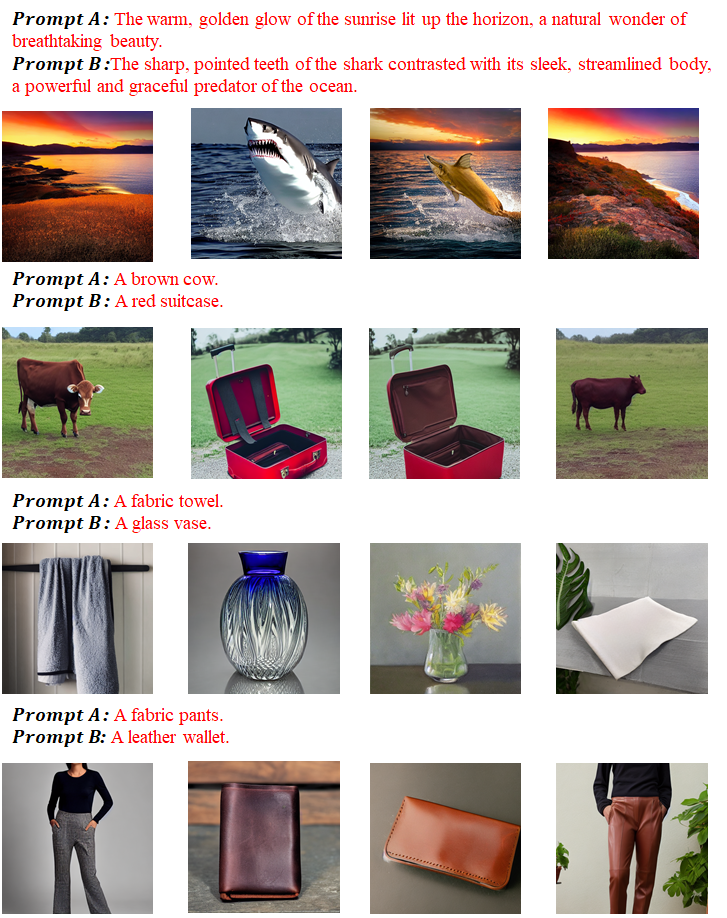}
%            \vspace{-0.1in}
		\caption{More generated examples under tokens from \texttt{S-PromptSet}.}
%            \vspace{-0.2in}
		\label{fig:eos substitution}
	\end{figure}
	
%	\begin{figure}[t!]
%		\centering
%		\includegraphics[scale=0.5]{./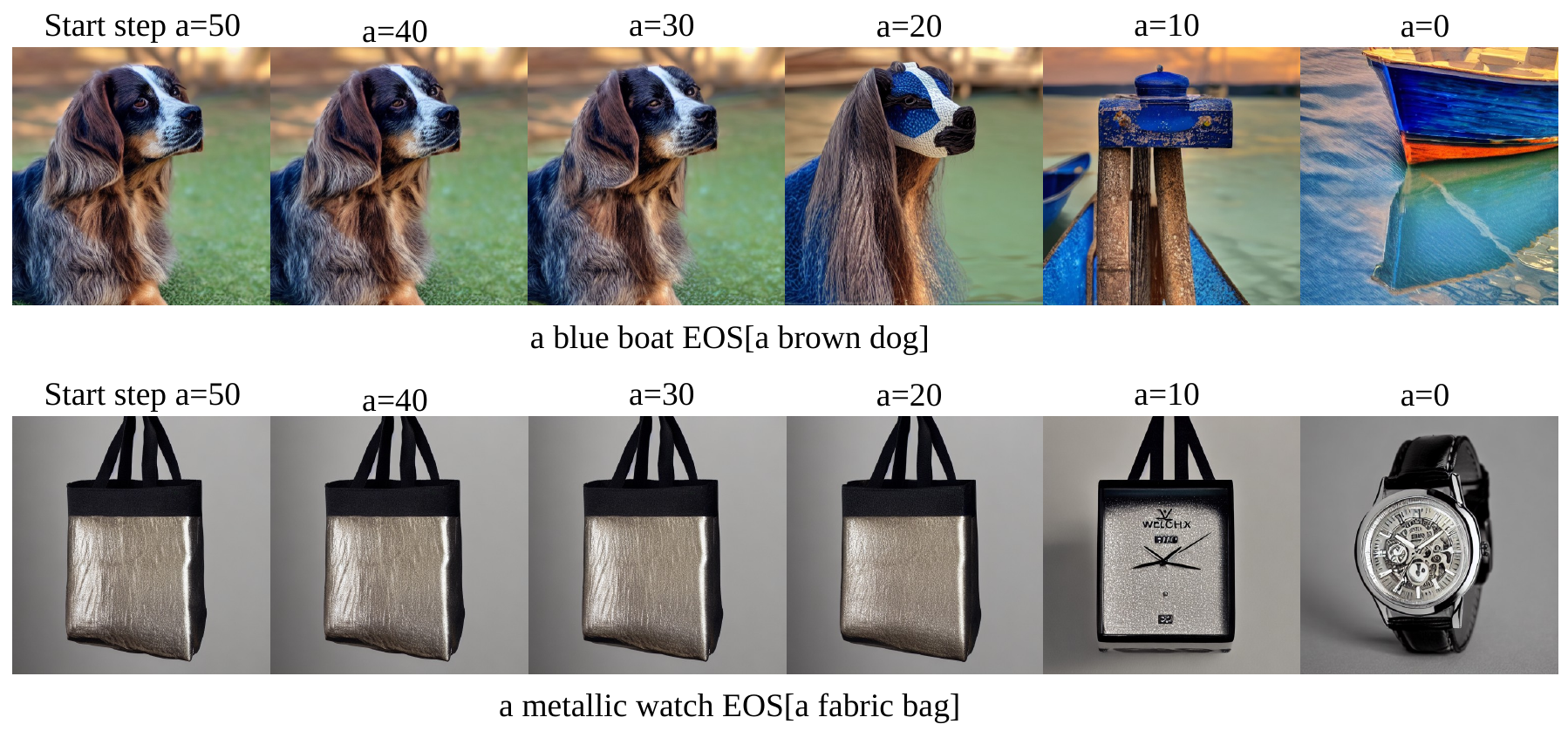}
%		\caption{Generated examples when with switched [\texttt{EOS}] during denoising steps $[a, 50]$.}
%		\label{fig:eos substitution step}
%	\end{figure}
%	
%	\begin{figure*}[t!]
%		\centering
%		\includegraphics[scale=0.43]{./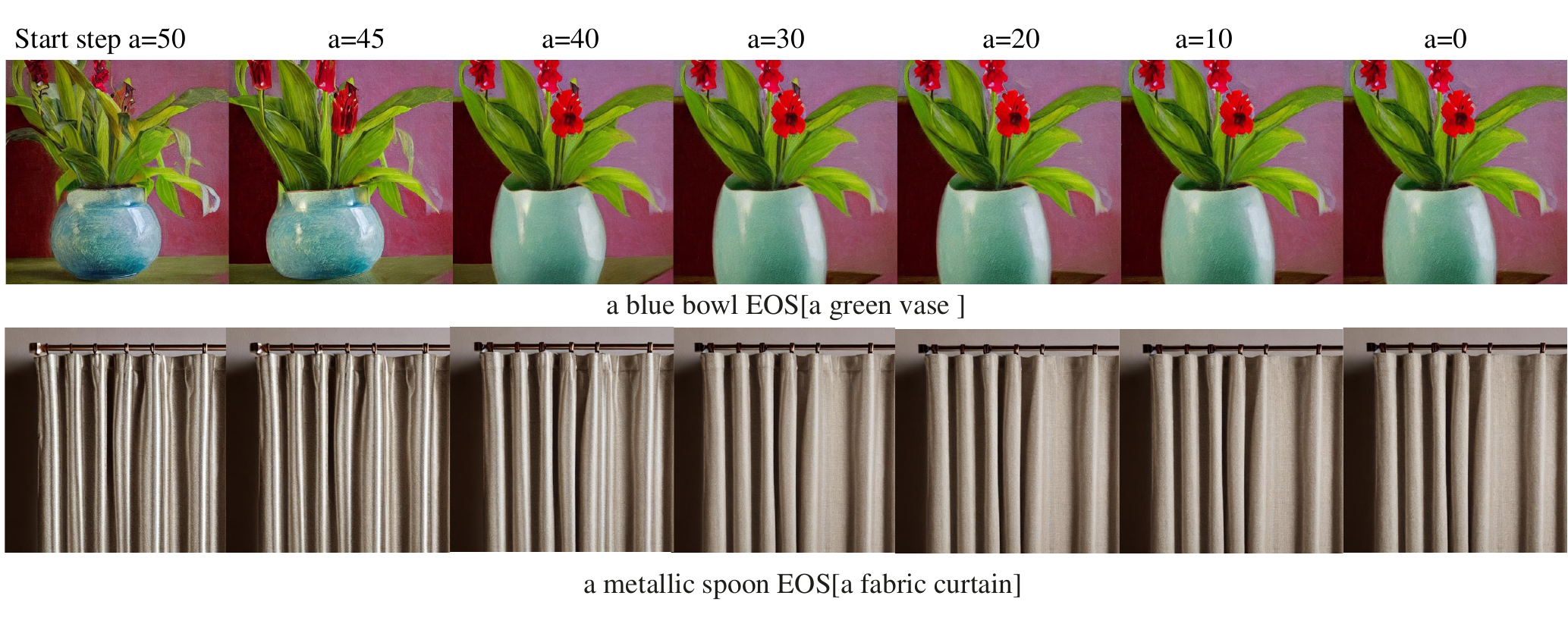}
%		\caption{text prompt ``[\texttt{SOS}] + Prompt $A$ + [\texttt{EOS}]$_{B}$'' in steps $[0, a]$ and ``[\texttt{SOS}] + Sem + [\texttt{EOS}]$_{B}$'' in steps $[a, 50]$.}
%		 \vspace{-0.2in}
%		\label{fig:semantic_convey_vis}
%	\end{figure*}
	
	As mentioned in Section \ref{sec:preliminaries}, the information from [\texttt{EOS}] is conveyed by the cross-attention module. More concretely, the Key ($K$) and Value ($V$) in it, respectively decide the weights and features in the output of the cross-attention module (a weighted sum of features). Next, we explore their individual influence for [\texttt{EOS}] to further reveal the working mechanism of it.          
	\par
	Concretely, as in Section \ref{sec:The Working Mechanism of Text Prompt}, we generate images under the constructed text prompt set \texttt{S-PromptSet}. However, the substitution of [\texttt{EOS}] is only conducted on computing Key or Values in cross-attention module, which are respectively denoted as $K$-Substitution (Sub) and $V$-Sub. For such two substitutions, similar to Table \ref{tbl:eos substitution alignment}, we compare the image-text alignment of generated images with source and target prompts. 
	\par
	The results are summarized in Table \ref{tbl:eos kv substitution alignment}. As can be seen, substituting the [\texttt{EOS}] in $V$ has a larger influence than the substitutions in $K$. To explain this, as we have observed in Figure \ref{fig:token_cross_atten}, the weights on semantic tokens and [\texttt{EOS}] are significantly smaller than the ones on [\texttt{SOS}]. However, the $K$ of [\texttt{EOS}] is only related to these small weights, which have limited influence. In contrast to $K$, the $V$ of [\texttt{EOS}] contains information on features, which can be directly conveyed in generated images. So that we can conclude the value of [\texttt{EOS}] dominates the influence in generation as observed in Table \ref{tbl:eos kv substitution alignment}. We present some generated images under text prompts from \texttt{S-PromptSet}, but with only Key or Value substituted as in Table \ref{tbl:eos kv substitution alignment}. The generated images are in Figure \ref{fig:kv substitution}. 
    \par
    We further verify the variation of the cross-attention map after substituting [\texttt{EOS}]. For each pixel, the cross-attention map of it is a discrete probability distribution. Thus, we compute the KL-divergence \citep{vapnik1999nature} between probability distributions under substituted/unsubstituted $K$. The averaged KL-divergence over pixels and layers in models under different denoising steps is presented in Figure \ref{fig:divergence}, where we add the KL-divergence of cross-attention map distribution between a uniform distribution as a baseline. The result shows that even with substituted [\texttt{EOS}], the cross-attention map does not vary much. We speculate that this is because as in Figure \ref{fig:token_cross_atten}, the weights in [\texttt{SOS}] dominate the cross-attention map. Thus, in the cross-attention module, altering $K$ has a slighter influence compared with altering $V$.

\section{Firstly Conveyed Information for Subject-Driven Generation}\label{app:subject}
As mentioned in Section \ref{sec:eos contains more information}, the injected textual information is conveyed in the first stage of diffusion model. Since the information is conveyed in the cross-attention module of model, we speculate this phenomenon may be generalized to the other conditional generation task, e.g., subject-driven generation with an extra image as condition. 
\par
To see this, we conduct experiments under two tasks: zero-shot subject-driven generation and human face generation. For such two tasks, there is extra reference image (given subject and human face) used as condition to guide image generation. We use the sampling strategy as in Figure \ref{fig:substitution} for the two tasks, respectively follow the backbone methods AnyDoor \citep{chen2024anydoor} and Photomaker \citep{li2024Photomaker}. The generated images are in Figure \ref{fig:subject driven} and \ref{fig:face generation}, respectively. 
\par
As can be seen, the generation results verify the conclusion that for conditional information from in the other modality than text, they will be also firstly conveyed in during the diffusion process.

\section{More Generated Images}\label{eq:app:more generated images}  
%\subsection{Zero or Random Substitution}\label{app:Zero or Random Substitution}
%In this subsection, we compare the generated images under constructed text prompt's combinations in Table \ref{tbl:substitution}, e.g., ``Sem + Zero''. The generated images are presented in Figure \ref{fig:zero_rand_substitution}. As can be seen, preserving [\texttt{EOS}] generates images has better consistency with target prompt, compared to the ones with Sem preserved. 

\subsection{[\texttt{EOS}] Substitution}\label{app:eos Substitution}
In this subsection, we first present the generated images under text prompts from \texttt{S-PromptSet} in Figure \ref{fig:eos substitution}. As can be seen, most overall shape of generated images are consistent with the ones conveyed by \texttt{[EOS]}.  
%\par
%Next, we present the generated images under text prompts with \texttt{[EOS]} substituted in different denoising steps range, i.e., in $[a, 50]$ steps. The images are in Figure \ref{fig:eos substitution step}. As can be seen, the information of [\texttt{EOS}] are already conveyed in the first several denosing steps of generation, because the generated images are similar to target prompt when $a$ is close to 50 e.g., $a = 20$. This verifies our conclusion in Section \ref{sec:Decides the Overall Information}.  

\subsection{Generated Images in Paragraph ``Acceleration of Sampling'' of Section \ref{sec:application}}
Next, we present more generated images under noise prediction \eqref{eq:new noise prediction} with varied $a$ in Figure \ref{fig:more substitution ddim} and \ref{fig:more substitution dpms}. 

\begin{figure}[h!]
	\centering
	\includegraphics[scale=0.6]{./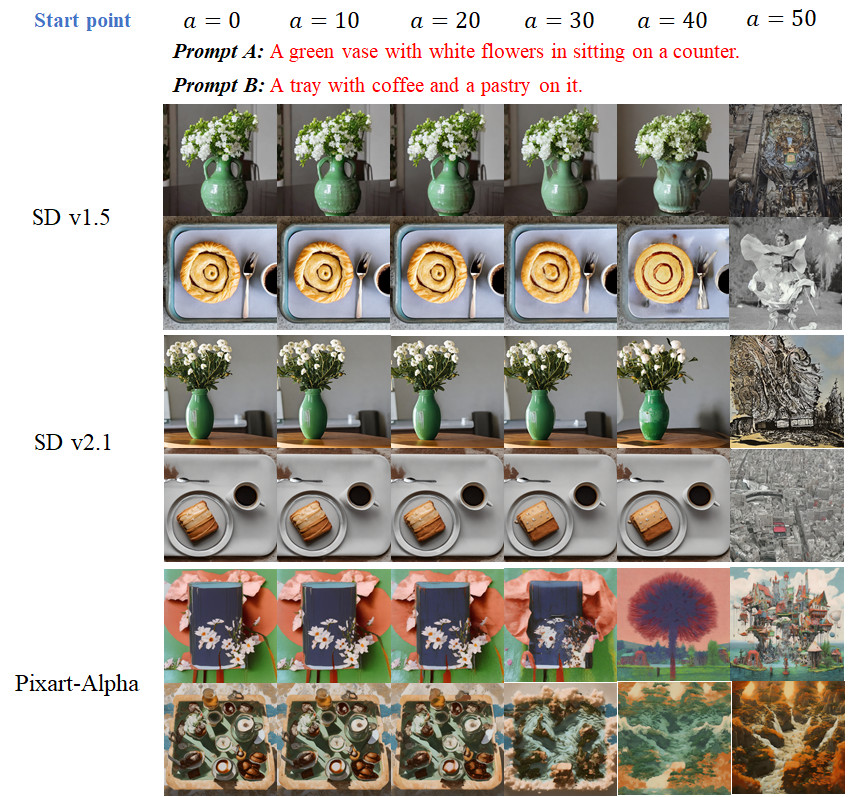}
	%            \vspace{-0.1in}
	\caption{The generated images with 50 steps DDIM under $\beps_{\btheta}$ in \eqref{eq:new noise prediction}, where the textual information are $\cC$ removed during time steps $t\in[0, a]$. With $a\to 50$, the inference cost is decreased.}
	%            \vspace{-0.2in}
	\label{fig:more substitution ddim}
\end{figure}

\begin{figure}[h!]
	\centering
	\includegraphics[scale=0.65]{./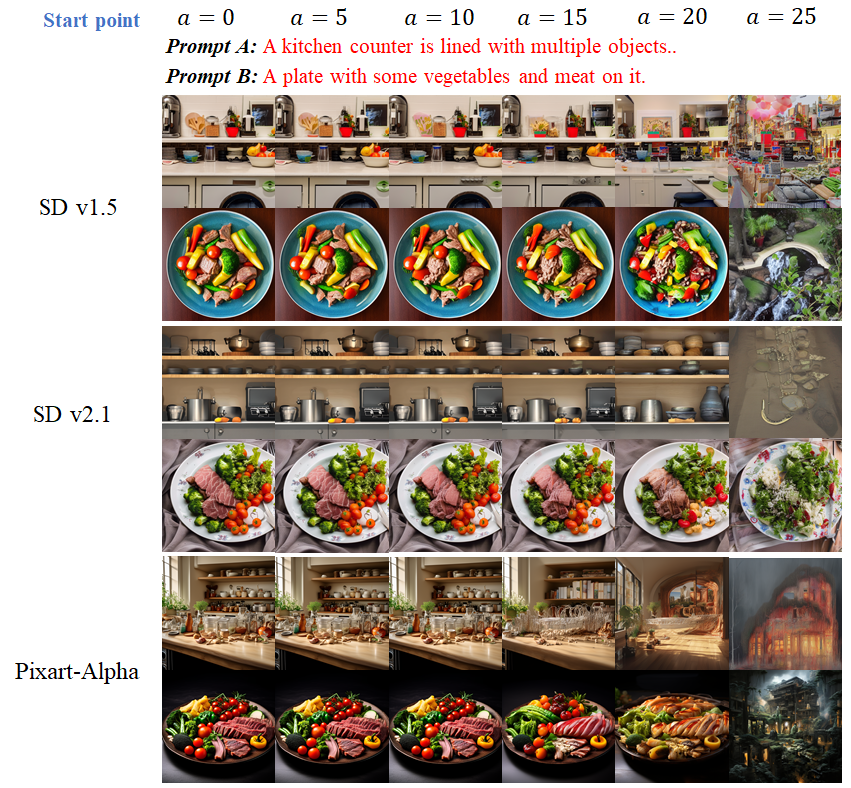}
	%            \vspace{-0.1in}
	\caption{More generated images with 25 steps DPM-Solver under $\beps_{\btheta}$ in \eqref{eq:new noise prediction}, where the textual information are $\cC$ removed during time steps $t\in[0, a]$. With $a\to 25$, the inference cost is decreased.}
	%            \vspace{-0.2in}
	\label{fig:more substitution dpms}
\end{figure}

\newpage 

\section*{NeurIPS Paper Checklist}
\begin{enumerate}
	
	\item {\bf Claims}
	\item[] Question: Do the main claims made in the abstract and introduction accurately reflect the paper's contributions and scope?
	\item[] Answer: \answerYes{} % Replace by \answerYes{}, \answerNo{}, or \answerNA{}.
	\item[] Justification: The main contributions of this paper have been clarified in Abstract. 
	\item[] Guidelines:
	\begin{itemize}
		\item The answer NA means that the abstract and introduction do not include the claims made in the paper.
		\item The abstract and/or introduction should clearly state the claims made, including the contributions made in the paper and important assumptions and limitations. A No or NA answer to this question will not be perceived well by the reviewers. 
		\item The claims made should match theoretical and experimental results, and reflect how much the results can be expected to generalize to other settings. 
		\item It is fine to include aspirational goals as motivation as long as it is clear that these goals are not attained by the paper. 
	\end{itemize}
	
	\item {\bf Limitations}
	\item[] Question: Does the paper discuss the limitations of the work performed by the authors?
	\item[] Answer: \answerYes{} % Replace by \answerYes{}, \answerNo{}, or \answerNA{}.
	\item[] Justification: The limitation of this paper is discussed in the main paper.
	\item[] Guidelines:
	\begin{itemize}
		\item The answer NA means that the paper has no limitation while the answer No means that the paper has limitations, but those are not discussed in the paper. 
		\item The authors are encouraged to create a separate "Limitations" section in their paper.
		\item The paper should point out any strong assumptions and how robust the results are to violations of these assumptions (e.g., independence assumptions, noiseless settings, model well-specification, asymptotic approximations only holding locally). The authors should reflect on how these assumptions might be violated in practice and what the implications would be.
		\item The authors should reflect on the scope of the claims made, e.g., if the approach was only tested on a few datasets or with a few runs. In general, empirical results often depend on implicit assumptions, which should be articulated.
		\item The authors should reflect on the factors that influence the performance of the approach. For example, a facial recognition algorithm may perform poorly when image resolution is low or images are taken in low lighting. Or a speech-to-text system might not be used reliably to provide closed captions for online lectures because it fails to handle technical jargon.
		\item The authors should discuss the computational efficiency of the proposed algorithms and how they scale with dataset size.
		\item If applicable, the authors should discuss possible limitations of their approach to address problems of privacy and fairness.
		\item While the authors might fear that complete honesty about limitations might be used by reviewers as grounds for rejection, a worse outcome might be that reviewers discover limitations that aren't acknowledged in the paper. The authors should use their best judgment and recognize that individual actions in favor of transparency play an important role in developing norms that preserve the integrity of the community. Reviewers will be specifically instructed to not penalize honesty concerning limitations.
	\end{itemize}
	
	\item {\bf Theory Assumptions and Proofs}
	\item[] Question: For each theoretical result, does the paper provide the full set of assumptions and a complete (and correct) proof?
	\item[] Answer: \answerNA{} % Replace by \answerYes{}, \answerNo{}, or \answerNA{}.
	\item[] Justification: 
	\item[] Guidelines:
	\begin{itemize}
		\item The answer NA means that the paper does not include theoretical results. 
		\item All the theorems, formulas, and proofs in the paper should be numbered and cross-referenced.
		\item All assumptions should be clearly stated or referenced in the statement of any theorems.
		\item The proofs can either appear in the main paper or the supplemental material, but if they appear in the supplemental material, the authors are encouraged to provide a short proof sketch to provide intuition. 
		\item Inversely, any informal proof provided in the core of the paper should be complemented by formal proofs provided in appendix or supplemental material.
		\item Theorems and Lemmas that the proof relies upon should be properly referenced. 
	\end{itemize}
	
	\item {\bf Experimental Result Reproducibility}
	\item[] Question: Does the paper fully disclose all the information needed to reproduce the main experimental results of the paper to the extent that it affects the main claims and/or conclusions of the paper (regardless of whether the code and data are provided or not)?
	\item[] Answer: \answerYes{} % Replace by \answerYes{}, \answerNo{}, or \answerNA{}.
	\item[] Justification: The experimental details are in main part of this paper. 
	\item[] Guidelines:
	\begin{itemize}
		\item The answer NA means that the paper does not include experiments.
		\item If the paper includes experiments, a No answer to this question will not be perceived well by the reviewers: Making the paper reproducible is important, regardless of whether the code and data are provided or not.
		\item If the contribution is a dataset and/or model, the authors should describe the steps taken to make their results reproducible or verifiable. 
		\item Depending on the contribution, reproducibility can be accomplished in various ways. For example, if the contribution is a novel architecture, describing the architecture fully might suffice, or if the contribution is a specific model and empirical evaluation, it may be necessary to either make it possible for others to replicate the model with the same dataset, or provide access to the model. In general. releasing code and data is often one good way to accomplish this, but reproducibility can also be provided via detailed instructions for how to replicate the results, access to a hosted model (e.g., in the case of a large language model), releasing of a model checkpoint, or other means that are appropriate to the research performed.
		\item While NeurIPS does not require releasing code, the conference does require all submissions to provide some reasonable avenue for reproducibility, which may depend on the nature of the contribution. For example
		\begin{enumerate}
			\item If the contribution is primarily a new algorithm, the paper should make it clear how to reproduce that algorithm.
			\item If the contribution is primarily a new model architecture, the paper should describe the architecture clearly and fully.
			\item If the contribution is a new model (e.g., a large language model), then there should either be a way to access this model for reproducing the results or a way to reproduce the model (e.g., with an open-source dataset or instructions for how to construct the dataset).
			\item We recognize that reproducibility may be tricky in some cases, in which case authors are welcome to describe the particular way they provide for reproducibility. In the case of closed-source models, it may be that access to the model is limited in some way (e.g., to registered users), but it should be possible for other researchers to have some path to reproducing or verifying the results.
		\end{enumerate}
	\end{itemize}

	\item {\bf Open access to data and code}
	\item[] Question: Does the paper provide open access to the data and code, with sufficient instructions to faithfully reproduce the main experimental results, as described in supplemental material?
	\item[] Answer: \answerNo{} % Replace by \answerYes{}, \answerNo{}, or \answerNA{}.
	\item[] Justification: We will release the code in future. 
	\item[] Guidelines:
	\begin{itemize}
		\item The answer NA means that paper does not include experiments requiring code.
		\item Please see the NeurIPS code and data submission guidelines (\url{https://nips.cc/public/guides/CodeSubmissionPolicy}) for more details.
		\item While we encourage the release of code and data, we understand that this might not be possible, so “No” is an acceptable answer. Papers cannot be rejected simply for not including code, unless this is central to the contribution (e.g., for a new open-source benchmark).
		\item The instructions should contain the exact command and environment needed to run to reproduce the results. See the NeurIPS code and data submission guidelines (\url{https://nips.cc/public/guides/CodeSubmissionPolicy}) for more details.
		\item The authors should provide instructions on data access and preparation, including how to access the raw data, preprocessed data, intermediate data, and generated data, etc.
		\item The authors should provide scripts to reproduce all experimental results for the new proposed method and baselines. If only a subset of experiments are reproducible, they should state which ones are omitted from the script and why.
		\item At submission time, to preserve anonymity, the authors should release anonymized versions (if applicable).
		\item Providing as much information as possible in supplemental material (appended to the paper) is recommended, but including URLs to data and code is permitted.
	\end{itemize}

	\item {\bf Experimental Setting/Details}
	\item[] Question: Does the paper specify all the training and test details (e.g., data splits, hyperparameters, how they were chosen, type of optimizer, etc.) necessary to understand the results?
	\item[] Answer: \answerYes{} % Replace by \answerYes{}, \answerNo{}, or \answerNA{}.
	\item[] Justification: They are specified in main part of this paper. 
	\item[] Guidelines:
	\begin{itemize}
		\item The answer NA means that the paper does not include experiments.
		\item The experimental setting should be presented in the core of the paper to a level of detail that is necessary to appreciate the results and make sense of them.
		\item The full details can be provided either with the code, in appendix, or as supplemental material.
	\end{itemize}
	
	\item {\bf Experiment Statistical Significance}
	\item[] Question: Does the paper report error bars suitably and correctly defined or other appropriate information about the statistical significance of the experiments?
	\item[] Answer: \answerNA{} % Replace by \answerYes{}, \answerNo{}, or \answerNA{}.
	\item[] Justification: The results have no error bars.  
	\item[] Guidelines:
	\begin{itemize}
		\item The answer NA means that the paper does not include experiments.
		\item The authors should answer "Yes" if the results are accompanied by error bars, confidence intervals, or statistical significance tests, at least for the experiments that support the main claims of the paper.
		\item The factors of variability that the error bars are capturing should be clearly stated (for example, train/test split, initialization, random drawing of some parameter, or overall run with given experimental conditions).
		\item The method for calculating the error bars should be explained (closed form formula, call to a library function, bootstrap, etc.)
		\item The assumptions made should be given (e.g., Normally distributed errors).
		\item It should be clear whether the error bar is the standard deviation or the standard error of the mean.
		\item It is OK to report 1-sigma error bars, but one should state it. The authors should preferably report a 2-sigma error bar than state that they have a 96\% CI, if the hypothesis of Normality of errors is not verified.
		\item For asymmetric distributions, the authors should be careful not to show in tables or figures symmetric error bars that would yield results that are out of range (e.g. negative error rates).
		\item If error bars are reported in tables or plots, The authors should explain in the text how they were calculated and reference the corresponding figures or tables in the text.
	\end{itemize}
	
	\item {\bf Experiments Compute Resources}
	\item[] Question: For each experiment, does the paper provide sufficient information on the computer resources (type of compute workers, memory, time of execution) needed to reproduce the experiments?
	\item[] Answer: \answerYes{} % Replace by \answerYes{}, \answerNo{}, or \answerNA{}.
	\item[] Justification: 
	\item[] Guidelines:
	\begin{itemize}
		\item The answer NA means that the paper does not include experiments.
		\item The paper should indicate the type of compute workers CPU or GPU, internal cluster, or cloud provider, including relevant memory and storage.
		\item The paper should provide the amount of compute required for each of the individual experimental runs as well as estimate the total compute. 
		\item The paper should disclose whether the full research project required more compute than the experiments reported in the paper (e.g., preliminary or failed experiments that didn't make it into the paper). 
	\end{itemize}
	
	\item {\bf Code Of Ethics}
	\item[] Question: Does the research conducted in the paper conform, in every respect, with the NeurIPS Code of Ethics \url{https://neurips.cc/public/EthicsGuidelines}?
	\item[] Answer: \answerYes{} % Replace by \answerYes{}, \answerNo{}, or \answerNA{}.
	\item[] Justification:
	\item[] Guidelines:
	\begin{itemize}
		\item The answer NA means that the authors have not reviewed the NeurIPS Code of Ethics.
		\item If the authors answer No, they should explain the special circumstances that require a deviation from the Code of Ethics.
		\item The authors should make sure to preserve anonymity (e.g., if there is a special consideration due to laws or regulations in their jurisdiction).
	\end{itemize}

	\item {\bf Broader Impacts}
	\item[] Question: Does the paper discuss both potential positive societal impacts and negative societal impacts of the work performed?
	\item[] Answer: \answerNA{} % Replace by \answerYes{}, \answerNo{}, or \answerNA{}.
	\item[] Justification:
	\item[] Guidelines:
	\begin{itemize}
		\item The answer NA means that there is no societal impact of the work performed.
		\item If the authors answer NA or No, they should explain why their work has no societal impact or why the paper does not address societal impact.
		\item Examples of negative societal impacts include potential malicious or unintended uses (e.g., disinformation, generating fake profiles, surveillance), fairness considerations (e.g., deployment of technologies that could make decisions that unfairly impact specific groups), privacy considerations, and security considerations.
		\item The conference expects that many papers will be foundational research and not tied to particular applications, let alone deployments. However, if there is a direct path to any negative applications, the authors should point it out. For example, it is legitimate to point out that an improvement in the quality of generative models could be used to generate deepfakes for disinformation. On the other hand, it is not needed to point out that a generic algorithm for optimizing neural networks could enable people to train models that generate Deepfakes faster.
		\item The authors should consider possible harms that could arise when the technology is being used as intended and functioning correctly, harms that could arise when the technology is being used as intended but gives incorrect results, and harms following from (intentional or unintentional) misuse of the technology.
		\item If there are negative societal impacts, the authors could also discuss possible mitigation strategies (e.g., gated release of models, providing defenses in addition to attacks, mechanisms for monitoring misuse, mechanisms to monitor how a system learns from feedback over time, improving the efficiency and accessibility of ML).
	\end{itemize}
	
	\item {\bf Safeguards}
	\item[] Question: Does the paper describe safeguards that have been put in place for responsible release of data or models that have a high risk for misuse (e.g., pretrained language models, image generators, or scraped datasets)?
	\item[] Answer: \answerNA{} % Replace by \answerYes{}, \answerNo{}, or \answerNA{}.
	\item[] Justification: 
	\item[] Guidelines:
	\begin{itemize}
		\item The answer NA means that the paper poses no such risks.
		\item Released models that have a high risk for misuse or dual-use should be released with necessary safeguards to allow for controlled use of the model, for example by requiring that users adhere to usage guidelines or restrictions to access the model or implementing safety filters. 
		\item Datasets that have been scraped from the Internet could pose safety risks. The authors should describe how they avoided releasing unsafe images.
		\item We recognize that providing effective safeguards is challenging, and many papers do not require this, but we encourage authors to take this into account and make a best faith effort.
	\end{itemize}
	
	\item {\bf Licenses for existing assets}
	\item[] Question: Are the creators or original owners of assets (e.g., code, data, models), used in the paper, properly credited and are the license and terms of use explicitly mentioned and properly respected?
	\item[] Answer: \answerNA{} % Replace by \answerYes{}, \answerNo{}, or \answerNA{}.
	\item[] Justification: 
	\item[] Guidelines:
	\begin{itemize}
		\item The answer NA means that the paper does not use existing assets.
		\item The authors should cite the original paper that produced the code package or dataset.
		\item The authors should state which version of the asset is used and, if possible, include a URL.
		\item The name of the license (e.g., CC-BY 4.0) should be included for each asset.
		\item For scraped data from a particular source (e.g., website), the copyright and terms of service of that source should be provided.
		\item If assets are released, the license, copyright information, and terms of use in the package should be provided. For popular datasets, \url{paperswithcode.com/datasets} has curated licenses for some datasets. Their licensing guide can help determine the license of a dataset.
		\item For existing datasets that are re-packaged, both the original license and the license of the derived asset (if it has changed) should be provided.
		\item If this information is not available online, the authors are encouraged to reach out to the asset's creators.
	\end{itemize}
	
	\item {\bf New Assets}
	\item[] Question: Are new assets introduced in the paper well documented and is the documentation provided alongside the assets?
	\item[] Answer: \answerNA{} % Replace by \answerYes{}, \answerNo{}, or \answerNA{}.
	\item[] Justification: 
	\item[] Guidelines:
	\begin{itemize}
		\item The answer NA means that the paper does not release new assets.
		\item Researchers should communicate the details of the dataset/code/model as part of their submissions via structured templates. This includes details about training, license, limitations, etc. 
		\item The paper should discuss whether and how consent was obtained from people whose asset is used.
		\item At submission time, remember to anonymize your assets (if applicable). You can either create an anonymized URL or include an anonymized zip file.
	\end{itemize}
	
	\item {\bf Crowdsourcing and Research with Human Subjects}
	\item[] Question: For crowdsourcing experiments and research with human subjects, does the paper include the full text of instructions given to participants and screenshots, if applicable, as well as details about compensation (if any)? 
	\item[] Answer: \answerNA{} % Replace by \answerYes{}, \answerNo{}, or \answerNA{}.
	\item[] Justification:
	\item[] Guidelines:
	\begin{itemize}
		\item The answer NA means that the paper does not involve crowdsourcing nor research with human subjects.
		\item Including this information in the supplemental material is fine, but if the main contribution of the paper involves human subjects, then as much detail as possible should be included in the main paper. 
		\item According to the NeurIPS Code of Ethics, workers involved in data collection, curation, or other labor should be paid at least the minimum wage in the country of the data collector. 
	\end{itemize}
	
	\item {\bf Institutional Review Board (IRB) Approvals or Equivalent for Research with Human Subjects}
	\item[] Question: Does the paper describe potential risks incurred by study participants, whether such risks were disclosed to the subjects, and whether Institutional Review Board (IRB) approvals (or an equivalent approval/review based on the requirements of your country or institution) were obtained?
	\item[] Answer: \answerNA{} % Replace by \answerYes{}, \answerNo{}, or \answerNA{}.
	\item[] Justification: 
	\item[] Guidelines:
	\begin{itemize}
		\item The answer NA means that the paper does not involve crowdsourcing nor research with human subjects.
		\item Depending on the country in which research is conducted, IRB approval (or equivalent) may be required for any human subjects research. If you obtained IRB approval, you should clearly state this in the paper. 
		\item We recognize that the procedures for this may vary significantly between institutions and locations, and we expect authors to adhere to the NeurIPS Code of Ethics and the guidelines for their institution. 
		\item For initial submissions, do not include any information that would break anonymity (if applicable), such as the institution conducting the review.
	\end{itemize}
	
\end{enumerate}

	\newpage

\end{document}